\theoremstyle{thmstyleone}%
\newtheorem{theorem}{Theorem}
\newtheorem{proposition}[theorem]{Proposition}%
\theoremstyle{thmstyletwo}%
\theoremstyle{thmstylethree}%
\newtheorem{definition}{Definition}%
\algnewcommand\PROCEDURE{\item[\algorithmicprocedure]}%
\algnewcommand\algorithmicendprocedure{\textbf{end procedure}}
\algnewcommand\ENDPROCEDURE{\item[\algorithmicendprocedure]}%
\begin{document}

\title[Article Title]{Learning Generalization and Regularization of Nonhomogeneous Temporal Poisson Processes}

\author[1]{\fnm{Son} \sur{Nguyen Van}}\email{son.nvncs17020@sis.hust.edu.vn}

\author*[2]{\fnm{Hoai} \sur{Nguyen Xuan}}\email{nx.hoai@hutech.edu.vn}
\equalcont{These authors contributed equally to this work.}

\affil[1]{\orgdiv{Computer Science Department}, \orgname{Hanoi University of Science and Technology}, \orgaddress{\street{No. 1 Dai Co Viet road}, \city{Hanoi}, \postcode{10000}, \country{Vietnam}}}

\affil*[2]{\orgdiv{Faculty of Information Technology}, \orgname{HUTECH University}, \orgaddress{\street{No. 475A Dien Bien Phu}, \city{Ho Chi Minh City}, \postcode{72308}, \country{Vietnam}}}


\abstract{The Poisson process, especially the nonhomogeneous Poisson process (NHPP), is an essentially important counting process with numerous real-world applications. Up to date, almost all works in the literature have been on the estimation of NHPPs with infinite data using non-data driven binning methods. In this paper, we formulate the problem of estimation of NHPPs from finite and limited data as a learning generalization problem. We mathematically show that while binning methods are essential for the estimation of NHPPs, they pose a threat of overfitting when the amount of data is limited. We propose a framework for regularized learning of NHPPs with two new adaptive and data driven binning methods that help to remove the ad-hoc tuning of binning parameters. Our methods are experimentally tested on synthetic and real-world datasets and the results show their effectiveness.}

\keywords{Nonhomogeneous Poisson Process, Learning Generalization, Arrival Rate Regression}



\maketitle

\textbf{This work has been submitted to the IEEE for possible publication. Copyright may be transferred without notice, after which this version may no longer be accessible.}

\section{Introduction}\label{sec:introduction}
Stochastic processes allow us to model uncertain quantities that evolve over time. One of the most fundamental and important of such processes is the Poisson process
\citep{Komaki2021, Jahani2021, Moore2019, Alizadeh2013} as it is appropriate for modelling a large number of completely random phenomena. Therefore, the Poisson process has been widely used in various real-world applications such as taxi routing \citep{SonNV}, queueing systems \citep{bibi-Brown2005}, and e-mail arrival counting \citep{Alizadeh2008, Alizadeh2013}. 

Because of the importance and popularity of Poisson processes, estimation of them from data has been extensively studied in the field of statistics and operations research, especially for the one-dimension case (called Temporal Poisson Process). While estimation of homogeneous Poisson processes (HPP), where the rate is constant, is trivial, the problem of estimating nonhomogeneous Poisson processes (NHPP), where the rate is a function of time, is considered more challenging  \citep{bibi-Brown2005, Alizadeh2013, Benjamin2021}. Moreover, all of the estimation methods in the literature so far have relied on dividing observed data into bins (binning method), very often with equal length \citep{Moore2019}. While it is reasonable to use binning method for statistical estimation with infinite amount of data as it helps to reduce the estimation difficulty by transforming a NHPP into a HPP (see further explanations in section \ref{ProblemDescription}), it could create some serious problems when working with finite amount of data (which is usually the case for real-world applications). Firstly, when the amount of data is finite, dividing into bins might not guarantee that the data in each bin still follows assumed distributional assumptions, in this case - Poisson property. For instance, in all the works on estimation of NHPPs in the literature that we have studied, none of them even tested if the observed data used in their experiments actually came from a Poisson process (except for synthetic datasets coming from simulating some predefined hypothetical Poisson processes). Some even explicitly acknowledged this concerned research hole \citep{Alizadeh2013}. Secondly, with finite amount of data, the estimation becomes a learning problem, where the central issue is generalization, i.e how well the estimated/learned model could predict the future/unseen data. To the best of our knowledge, we have not seen in the literature any work that systematically formulate and solve the problem of estimation of NHPPs from finite amount of data as a learning generalization problem.  Thirdly, how the binning should be done so that it could help to solve the first and second aforementioned problems.

In this paper, we formulate the problem of estimating temporal NHPPs from finite amount of data as a learning generalization problem. Our main contributions are:
\begin{itemize}
	\item We mathematically prove that for estimation/learning NHPPs from finite amount of data, data binning could increase the probability of overfitting (i.e the learned model is good on observed/training data but bad on future/ unseen data).
	\item We propose two adaptive and data-driven binning methods that help to regularize the learning of NHPPs as well as automatically obtains the number of bins so as to remove the ad-hoc tuning of this parameter for the estimation/learning process.
\end{itemize}
The remaining part of this paper is organized as follows. The related work is briefly given in Section \ref{LitteratureReview}. Section \ref{BasicConcept} provides some backgrounds on mathematical concepts and results used later in the paper. The problem description is shown in Section \ref{ProblemDescription}. Our NHPPs learning framework and the two novel binning methods for learning regularization of NHPPs are detailed in Section \ref{algorithm}. We present and discuss our experimental results in Section \ref{ex}. The paper is concluded in Section \ref{conclusion}.
\section{Literature Review}\label{LitteratureReview}
There are a number of approaches to estimate NHPPs via estimating their rate functions in nonparemtric and parametric manner. For nonparametric approaches, \cite{Leemis2004, Arkin2000, Benjamin2021} proposed simple  estimators for estimating the cumulative rate function of NHPPs on the time interval from historical data. Their methods do not require the assumption of any parameters or weights for the rate function. The authors in \cite{Leemis2004} considered interval estimators of the cumulative rate function. The expected number of events for each interval is a piecewise-constant rate function. In \cite{Arkin2000}, the authors extended the work of \cite{Leemis1991} to solve one or more realizations (data observations) that include overlap ones on an interval. Recently, the authors in \cite{Michael2022} exploited analogies between point process and time series models. An auto-regressive moving-average model for NHPP process is proposed to disentangle cluster mechanisms in continuously observed count data of an NHPP. Their algorithm overcomes the limitation of traditional estimation by allowing for a variety of parametric specifications, which need not be confined to the Markovian case, nor to fixed intensity. An obvious deficiency of these methods is that their models require every arrival time of events in every practical realization be stored in memory to allow the generation of simulated realizations. The works in \cite{Benjamin2021, Jahani2021} also have similar weakness. Other nonparametric estimators of the rate function were introduced in \cite{bibi-Kuhl2, Komaki2021}. In \cite{bibi-Kuhl2}, the authors proposed a nonparametric estimator using wavelets and adapted the nonnegative wavelet estimation of general density functions proposed in \cite{WalterShen} to estimate the rate function. This approach's main advantage is that it can be applied to most nonstationary processes that may or may not show a long-term trend or periodic emergence. In this approach, the rate function is expressed as a nonnegative linear combination of nonnegative wavelets. However, a drawback is that they require more constraints on the rate function than needed to achieve nonnegativity. As a result, there are several limitations of their procedure in constrained convex nonlinear programming problems \citep{Komaki2021}. 

Most of parametric estimators proposed in the literature have a fixed number of parameters, so they do not suffer the storage problems when the number of observed realizations (data)  increases. Moreover, when the true rate function is known to be continuous, piecewise continuous functions could be formed for fitting the rate functions \footnote{That naturally leads to the binning of the observed data.}. The piecewise function could be linear or nonlinear such as cubic splines \citep{Alizadeh2008, Alizadeh2013}. In these works, the author formulated the problem of estimation of a NHPP as a  semidefinite programming problem (SDP), where the estimated rate functions are required to be nonnegative. However, they also doubted if the approach could be practical for large and real world problems. Their approach do not scale if either the degree or the number of variables increases, essentially due to intractable numerical calculations. In the standard SDP formulations, the condition numbers of the feasible dual solutions are exponential in the degree \citep{DPapp2017}. Theoretically, every continuous rate function can be approximated by piecewise continuous functions. However, when the number of observed data (realizations) is limited, an empirical estimator might not be asymptotically close to the true rate function if it does not lie within the assumed parametric class \citep{Komaki2021}.

To our knowledge, almost all the works in the literature have been on the estimation of NHPPs, where the estimators will converge to the true rate function when the amount of observed data goes to infinity. None of them has focused on the case of learning a NHPP from finite data for generalization over future and unseen data.

\section{Background} \label{BasicConcept}
\subsection{Poisson Distribution}
The Poisson distribution, introduced by Sim\'{e}on Denis Poisson \citep{Poisson}, characterizes random variables $\mathcal{X}$ used for counting the number of events occurring in an interval. Let the average number of events in an interval be  $\lambda$, $\mathcal{X}$ has Poisson distribution if its possible values are positive integers and the probability of observing $\xi$ events ($\xi \in \mathbb{N^+}$) in the interval is as follows:
\begin{align*}
	P(\xi \textit{ events in the interval}) = \frac{e^{-\lambda} \lambda^\xi}{\xi !}
\end{align*}
For the Poisson distribution, the first two moments, expectation and variance, equal to $\lambda$ \citep{Suhov}:
\begin{align*}
	\mathbb{E}[\mathcal{X}] = \Sigma_{\xi \geq 0} \frac{e^{-\lambda} \lambda^\xi}{\xi !} \xi = \lambda
\end{align*}
\begin{align*}
	\textnormal{Var}[\mathcal{X}] = \Sigma_{\xi \geq 0} \frac{e^{-\lambda} \lambda^\xi}{\xi !} \xi^2 - \lambda^2= \lambda
\end{align*}
We shall denote $\mathcal{X}$ as $Poisson(\lambda)$.
\subsection{Temporal Poisson Process}
One of the most fundamental and widely-applied stochastic point processes is the temporal Poisson process used for characterizing discrete event sets in continuous time. It is usually utilized for modelling the situations where we want to count the occurrences of certain events that appear entirely at random, but with a certain rate (without a particular structure) \citep{Levy}. 

\subsubsection{Temporal Homogeneous Poisson Process}
Let $\{\mathcal{N}(t)$, $t \in \mathbb{R}^+\}$ be a counting process for counting the occurrences of events that occurred from time $0$ up to and including time $t$. Given $0 < t_1 < t_2 < \infty$, $\mathcal{N}(t_2) - \mathcal{N}(t_1)$ is the number of events occurred in the time interval $(t_1, t_2]$. 

\begin{definition} \citep{Ross2009}
	$\{\mathcal{N}(t)$, $t \in \mathbb{R}^+\}$ is said to have independent increments if for any selection of distinct time-points $0 < t_1 < t_2 < \dots < t_m$, the random vectors $\mathcal{N}(t_1), \mathcal{N}(t_2) - \mathcal{N}(t_1), \dots, \mathcal{N}(t_m) - \mathcal{N}(t_{m-1})$ are all independent.
\end{definition}
\begin{definition} \citep{Ross2009}
	$\{\mathcal{N}(t)$, $t \in \mathbb{R}^+\}$ is said to have stationary increments if for all $0 \leq t_1 \leq t_2 < \infty$, $\mathcal{N}(t_2) - \mathcal{N}(t_1)$ has a distribution that only depends on the length of the time interval $(t_2 - t_1)$.
\end{definition}
There are several equivalent definitions for a Poisson process. We present here the simplest one \citep{Ross2009}.

\begin{definition}
	$\{\mathcal{N}(t)$, $t \in \mathbb{R}^+\}$ is called the temporal homogeneous Poisson process with arrival rate $\lambda$ if:
	\begin{itemize}
		\item $\mathcal{N}(0) = 0$.
		\item $\mathcal{N}(t)$ has independent increments.
		\item $\mathcal{N}(t_2) - \mathcal{N}(t_1) \sim Poisson(\lambda (t_2 - t_1))$ for any two distinct times $0 \leq t_1 < t_2 < \infty$.
	\end{itemize}
\end{definition}
From the above definition, we could see that a homogeneous Poisson process has both stationary and independent increments.

\subsubsection{Temporal Nonhomogeneous Poisson Processes}
\label{TNHPP}
Let $\{\mathcal{N}(t)$, $t \in \mathbb{R}^+\}$ be a counting process, $\{\mathcal{N}(t)$, $t \in \mathbb{R}^+\}$ is a NHPP with arrival rate $\lambda(t)$ if:
\begin{itemize}
	\item $\mathcal{N}(0) = 0$.
	\item $\mathcal{N}(t)$ has independent increments.
	\item $\mathcal{N}(t_2) - \mathcal{N}(t_1) \sim Poisson(\int_{t_1}^{t_2} \lambda(u)du)$ for any two distinct times $0 \leq t_1 < t_2 < \infty$.
\end{itemize}
We notice that, contrary to the homogeneous case, a NHPP does not have stationary increments \citep{Ross2009,bibi-Cinlar}.

\subsection{Distribution Tests}
This section gives the descriptions of two main statistical tests for Poisson distribution. First, the well-known Kolmogorov-Smirnov test (KS) is introduced. Second, we describe an alternative statistical test based on the KS test after performing a logarithmic transformation of the data as proposed in \cite{bibi-Brown2005}.
\subsubsection{Standard Kolmogorov-Smirnov Test} $\newline$
Suppose that the null and alternative hypotheses of interest are:
\begin{align*}
	H_0 :& \textnormal{ The data come from a Poisson distribution.}\\
	H_1 :& \textnormal{ The data do not follow the Poisson distribution.}
\end{align*}
The KS test procedure compares the observed data with a specified theoretical distribution. This test determines the maximum distance between the empirical distribution function $F^*$ of the samples and the theoretical cumulative distribution function (CDF) of the known distribution $F$, where  $F^*$ for $m$ independently and identically distributed (i.i.d.) observations $\mathcal{X}_i$ is defined as:
\begin{align*}
	F^*(x) = \frac{1}{m} \sum_{i=1}^{m} \mathcal{I}_{(-\infty, x]}(\mathcal{X}_i)
\end{align*}
where the indicator function $\mathcal{I}_{(-\infty ,x]}(\mathcal{X}_i)$ is equal to 1 if $\mathcal{X}_i \leq x$ and 0, otherwise. The KS statistic for a given CDF $F(x)$ is calculated as follow:
\begin{align} \label{measureKS}
	D =\sup_x |F(x)- F^*(x)|
\end{align}	
The null hypothesis is rejected if $D$ is greater than the critical value $D_{m, \epsilon}$ obtained from a formula, where $\epsilon$ is the significance level. There are several variations of this formula in the literature that use somewhat different scalings for $D$ and critical regions \citep{Lilliefors, Durbin193, Sach1997}, etc. As in familiar situations, we use the formula $D_{m, \epsilon} = \sqrt{(-0.5 \ln(\epsilon/2)/m}$ in \cite{Sach1997}.

\subsubsection{Log Test} $\newline$
For each subinterval $[l, uk]$, we define:
\begin{align*}
	X_{i} = -(m + 1 - i)\log(\frac{u - l - t_i}{u - l - t_{i-1}}), \textnormal{ } 1 \leq i \leq m
\end{align*}
where $m$ denotes the total number of events and $t_i$ is the arrival time of the $i^{th}$ event.\\
Under $H_0$, within each given interval,  $\{X_i\}$ will be independent standard exponential variables. Therefore, any standard test for the exponential distribution can be applied to test the null hypothesis. For convenience, we use the KS test. Equation (\ref{measureKS}) can then be applied using the exponential CDF $F(x) = 1 - e^{-x}$. The KS test is widely used for computational convenience, even though it may not have the most significant possible power against the alternatives of most interest. Compared to the KS test, \cite{bibi-Kim2014} showed the Log Test has higher power, i.e., that it has much greater power against these alternatives. Therefore, this test is used for our proposed method in this paper.
\subsection{Statistical Learning Theory}
In this section, we summarize some main concepts and results of the statistical learning theory given in \cite{Vapnik1998} that will serve as the foundations for the problem of learning NHPPs.
\subsubsection{Risk Functional Minimization}
Given a set of observations $z_1, z_2, \dots, z_m$ coming from an unknown probability distribution $F(z)$, many statistical learning problems based on these observed data could be formulated as minimizing the risk functional
\begin{align}\label{riskfunctional}
	R(\alpha) = \int Q(z, \alpha) dF(z)
\end{align}
where $Q(z, \alpha)$ is a specific loss function, $\alpha$ is a parameter in the set of parameters $\Lambda$.
\subsubsection{The Problem of Regression Estimation/ Learning}
\label{Regression}
One of the common learning problems is regression estimation, which could be defined as a risk functional minimization problem. Given $G$ as the generator of vector $x$, where $x \in \mathcal{X} \subset \mathbf{R}^d$ is i.i.d. observations according to some unknown (but fixed) probability distribution function $P(x)$.
The learning machine observes $m$ pairs	$(x_1, y_1), (x_2, y_2), \dots, (x_m, y_m)$ where, $(x_1, \dots, x_m)$ and $(y_1, \dots, y_m)$ are the input vectors and the supervisor's responses, respectively. Since the input vectors drawn randomly and independently from $P(x)$, the supervisor's responses are obtained at random from $P(y|x)$. In this case, there exists a joint (but unknown) distribution function $P(x, y)$.
The primary purpose of the learning machine is to construct an appropriate approximation, which will be used for predicting the supervisor's response $y_i$ on any input vector $x_i$ generated by the generator $G$. 

The approximating function is chosen from a given set of functions $\overline{F} = \{ f(x, \alpha) \emph{ } | \emph{ } f \in L_2(P), \alpha \in \Lambda\}$.  To choose the best function, we measure the loss or difference between the supervisor's response to a given input and the learning machine's response. Let the function of conditional  expectation is given by:
\begin{align*}
	r(x) = \int y dP(y|x)
\end{align*}
This function is called the regression function. We assume that:
\begin{align*}
	\int y^2 dP(x, y) < \infty, \\
	\int r^2(x) dP(x, y) < \infty
\end{align*}
Since $f(x, \alpha) \in L_2(P)$, the minimum of the risk function
\begin{align}\label{risk}
	R(\alpha) &= \int (y - f(x, \alpha))^2 dP(x,y)
\end{align}
is attained at the regression function $r(x)$ (or the closest to $r(x)$) \citep{Vapnik1998}.

\subsubsection{Empirical Risk Minimization (ERM) Principle}
A learning machine is capable of implementing a set of functions $\overline{F}$. The problem of learning is to choose a function $f(x, \alpha)$ from $\overline{F}$ that minimizes the risk function (\ref{risk}). In order to define the regression problem, we introduce an $(d+1)$-dimensional variable $z = (x,y)$ and $z_1, z_2, \dots, z_m$ come from an unknown probability distribution $P(x, y)$
\begin{align*}
	z_i = (x_i, y_i), \emph{ } x_i \in \mathbf{R}^d, y_i \in \mathbf{R}, i = 1, \dots, m
\end{align*}
We cannot compute $R(\alpha)$ directly but only evaluate the empirical risk
\begin{align}\label{riskemp}
	R_{emp} (\alpha) = \frac{1}{m} \sum_{i=1}^m Q(z_i, \alpha), \emph{   } \alpha \in \Lambda
\end{align}
in the situation where the probability distribution $P(x, y)$ is unknown and the only available information is contained in the set of the observed data. The set of loss functions $Q(z, \alpha), \alpha \in \Lambda$ is of the form:
\begin{align*}
	Q(z, \alpha) = (y - f(x, \alpha))^2 
\end{align*}

The Glivenko-Cantelli theorem \citep{Vapnik1998} provides the theoretical foundation for the ERM principle in the asymptotic case (i.e. when the number of observations goes to infinity). It explains why ERM should give good results for learning problems with a large sample size.
However, for a limited amount of empirical data, in general, a small value of empirical risk does not guarantee a small value of the actual risk. That leads to the problem of generalization of ERM based learning machines with small and limited training data \citep{Vapnik1998}.


\subsubsection{Generalization of learning machines}
\label{generalizationoflearning}
As mentioned in the previous subsection, a small empirical risk (training error) value does not guarantee a small actual risk (true error) for a learning machine trained on limited data. It means that a learning machine could achieve small errors on a (limited size) training dataset but perform poorly on new (future and unseen) data. In that case, the learning machine is called overfitting. To avoid overfitting, i.e. to generalize the good performance on the training data to future and unseen data is the most desirable property that a learning machine should have. This property is called learning generalization. 

In practice, the generalization capacity of a learning machine is usually measured by comparing the errors of the learning machine on two independent datasets - the training dataset (used in the training phase) and the test dataset (used in the testing phase). In theory, it could be controlled by using the constructive risk bound given in the statistical learning theory. In particular,
Vapnik \citep{Vapnik2000} proved that with probability at least $1 - \eta$, $(0<\eta<1)$ simultaneously for all functions from the set of totally bounded functions $0 \leq Q(z, \alpha) \leq B, \alpha \in \Lambda$, with the finite VC dimension $h$, the inequality
\begin{align}\label{boundofrisk}
	R(\alpha) \leq R_{emp}(\alpha) + 2B \xi \left(  1 + \sqrt{1+\frac{R_{emp}(\alpha)}{B\xi}} \right)
\end{align}
holds true, where
\begin{align}\label{xi}
	\xi = \frac{h\lbrack\ln{(2m/h)} + 1\rbrack - \ln{(\eta/4)}}{m}
\end{align}
It can be seen from inequality (\ref{boundofrisk}) that to ensure that the true risk is small (i.e. good learning generalization), both empirical risk (training error) of the learning machine and the VC dimension of the function (hypothesis) space must be small. In other words, for a guaranteed generalization performance of a learning machine trained on a limited set of data, the training error should be small and the learning function should be chosen from a set of functions with as small VC dimension as possible. However, these two requirements are contradictory since a hypothesis space as a low VC dimension set of functions could be too narrow to contain the functions that could achieve small empirical risk values (in this case, the learning machine is called underfitting). Therefore, a successful statistical learning scheme should balance between the choice of the hypothesis space and the chance to get a function in that hypothesis space with small empirical risk (training error). One of such schemes is the structural risk minimization principle.

\subsubsection{Structural risk minimization principle}
The structural risk minimization (SRM) principle,  proposed in \cite{Vapnik1998}, is an inductive principle for model selection used for learning from finite training datasets. It describes a general way of capacity control that provides a trade-off between the chosen model's complexity and the empirical error.\\
Let us define a structure of nested subsets:
\begin{align*}
	\mathcal{H}_1 \subset \mathcal{H}_2 \subset \dots \subset \mathcal{H}_r \subset \dots
\end{align*}
where, $\mathcal{H}_k = \{ Q(z, \alpha_k) \emph{ }| \emph{ } \alpha_k \in \Lambda_k\}$. A structure is called admissible if it ensures that the capacity $h_k$ of $\mathcal{H}_k$ is less than $h_{k+1}$ of  $\mathcal{H}_{k+1}$ (see \citealp{Vapnik1998} for more details). The SRM principle states that the subset and the function that minimizes the empirical risk within such subset (e.g., $\mathcal{H}^*$ in Fig. \ref{figure-SRMP}), so as to minimize risk bound, yields the best overall generalization performance. Fig. \ref{figure-SRMP} gives the graphical explanation of the SRM principle.
\begin{figure}[!t]
	\centering
	\includegraphics[width=2in]{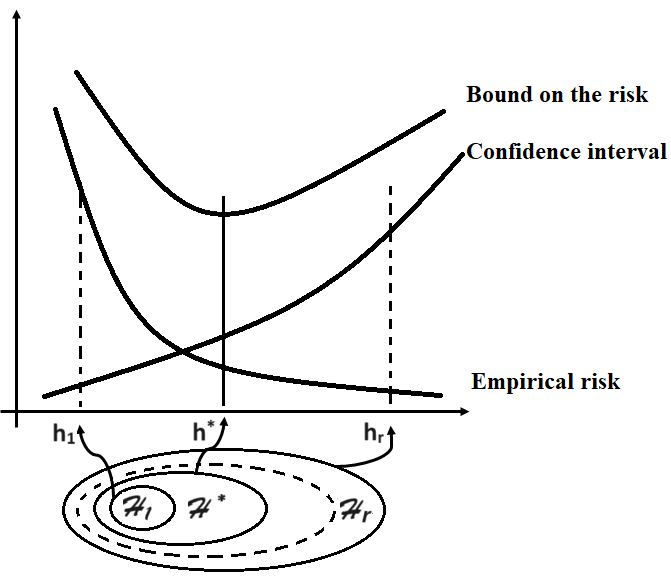}
	\caption{Structural risk minimization principle of Vapnik \citep{Vapnik1998}}
	\label{figure-SRMP}
\end{figure}


\subsubsection{Regularization methods}
The regularization method was proposed by A. N. Tikhonov \citep{Tikhonov}. This method discourages learning a more complex or flexible model to avoid the risk of overfitting and increase generalization. Regularization can be as simple as shrinking or penalizing large coefficients, often called weight decay. The idea of regularization is to find a solution for the empirical risk functional (\ref{riskemp}) that is modified by adding the shrinkage quantity.\\
$\textbf{Tikhonov regularization}$ minimizes the functional:
\begin{align*}
	R_{reg} (\alpha) = R_{emp} (\alpha) + \gamma * \textit{Reg}(\alpha),
\end{align*}
where $\gamma$ is called \emph{regularization parameter} which controls the importance of the regularization functional $\textit{Reg}(\alpha)$. The regularization functional $\textit{Reg}(\alpha)$ is chosen to impose a penalty on the complexity of the model. For the case of Hilbert space, we typically choose $\mathcal{L}_2$-regularization that is differentiable:
\begin{align*}
	\textit{Reg}(\alpha) = ||\alpha||^2_2
\end{align*}
The coefficients are estimated by minimizing the functional $R_{reg}(\alpha)$. Therefore, regularization is the process that regularizes or shrinks the coefficients towards zero. This straightforward solution is one of the main reasons for the use of Tikhonov regularization in the Hilbert space setting.\\
$\textbf{Ivanov regularization}$ is yet another method that solves:
\begin{align*}
	&\min_{\alpha} R_{emp}(\alpha),\\
	&s.t. \emph{ } \alpha \leq \tau
\end{align*}
The Ivanov regularization allows us to handle the two main forces directly: on the one hand, the depreciation of the functional $R_{emp}$ and, on the other hand, the control of the hypothesis space, where the hypothesis minimizing the risk is chosen from.

\section{Problem Description} \label{ProblemDescription}
	For learning a NHPP, it is of primary importance to accurately estimate the underlying rate function since the entire NHPP could be generated using its rate function. For instance, the author in \cite{Streit2010} specified that realizations of $\mathcal{N}(t)$ on a specified finite subset $S$ could be generated by sampling independently on each of the Poisson variates in $\mathbf{R}^+$. He proposed a two-step procedure to generate a NHPP using its rate function $\lambda(t)$ on a bounded subset $S$.
	Therefore, for learning a NHPP, our main focus is to learn its rate function from limited observed data. 
	
	
	\subsection{Problem of Learning Temporal NHPPs}
	Given a temporal NHPP $\mathcal{N}(t)$ (as defined in  \ref{TNHPP}) with a fixed but unknown rate function $\lambda(t)$, consider time series (the full history of the NHPP) of a NHPP that are observed $\mathcal{F}_{[a,b]} = \{t_1, t_2, \dots, t_m\}$ ($a\leq t_i \leq b, \forall i = \overline{1,m}$). The observed dataset $D$ comprises of pairs $(t_i, y_i)$, where $y_i$ is the number of events occurred in the time interval $(t_{i-1}, t_i]$, i.e., $\mathcal{N}(t_i) - \mathcal{N}(t_{i-1})$ ($\forall i = \overline{2,m}$). As aforementioned, the problem of learning $\mathcal{N}(t)$ is reduced to learning the rate function $\lambda(t)$ from the finite dataset $D$, which could be cast to the problem of function regression as in \ref{Regression}. Under the setting of the ERM principle, the problem is to choose a function $\lambda(t, \alpha)$ from a function set $\overline{F}$ that minimizes the empirical risk functional (\ref{riskemp}). 
	\subsection{Generalization of Learning}
	\label{LearningG}
	To approximate the rate function, a popular approach in the literature is function estimation with infinite data, where a separation between training and testing phase is not necessary. In an estimation process, a natural first step is data binning, a frequently-used technique in recent literature \citep{bibi-Brown2005, Streit2010, Alizadeh2013, Ferrucci2013, Hsieh2017}. The common estimation process using data binning is given in Fig. \ref{figure-commonProcess}.
	\begin{figure}
		\centering
		\includegraphics[width=3in]{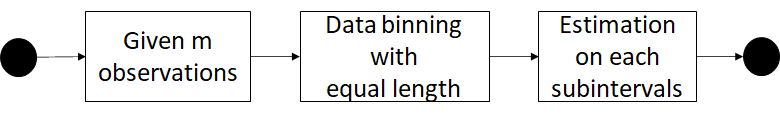}
		\caption{Common estimation process}
		\label{figure-commonProcess}
	\end{figure}
	When the length of each bin is equal (which is very much often in the literature), this approach has a single parameter, the number of bins, and so it is possible to obtain a more refined or coarser description of the data by controlling the number of bins. Therefore, it could be regarded as a deterministic transformation from a NHPP to a HPP (for example, see \citealp{Streit2010}, Section 2.10) when the number of bins gets increased. In the learning process, this approach usually restricts its attention to special parametric families of rate functions to improve the learning quality. 
	Learning on small subintervals can make an accurate approximation for all observed data. The powerful reason motivating the use of this model selection is the reduction of the complexity of the chosen functions and the reduction of training errors. Therefore, data binning is necessary and pervasive in both estimation and learning process to reduce the empirical risk value. This could be shown by the following proposition.\\
	Given $m$ observations, the functionals in (\ref{risk}), (\ref{riskemp}) and (\ref{xi}) can be rewritten as $R(\alpha, m)$, $R_{emp}(\alpha, m)$ and $\xi(m)$ related to the number of observations, $m$. Suppose the range of considered time $[0, T]$ is divided into $n$ consecutive nonoverlapping subintervals by knots $b_0 = 0 < b_1 < \dots < b_n = T$. 
	Let $t_i$, $0\leq t_i \leq T$, be arrival-time point of event $x_i$, $S_k = \{ z_i = (x_i, y_i) \emph{ } | \emph{ } b_{k-1} < t_i \leq b_{k}\wedge i = \overline{1,m}\}$, $\forall k = \overline{1, n}$. We denote the empirical risk functional on the $k^{th}$ subinterval as $R^k_{emp} (\alpha_k, m_k)$, where, $m_k = |S_k|$. The empirical risk functional after binning is computed as:
	\begin{align}
		\hat{R}_{emp}(\alpha_1, \dots, \alpha_n, m) = \frac{1}{m}\sum_{k=1}^n m_k R^k_{emp}(\alpha_k, m_k) \label{riskSum}
	\end{align}
	
	\begin{proposition}
		For any $n \geq 2$, the inequality
		\begin{align*}
			\min_{\alpha_1, \dots, \alpha_n \in \Lambda} \hat{R}_{emp} (\alpha_1, \dots, \alpha_n, m) \leq \min_{\alpha \in \Lambda} R_{emp}(\alpha, m)
		\end{align*}
		holds true.
	\end{proposition}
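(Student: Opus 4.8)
The plan is to observe that the binned empirical risk $\hat{R}_{emp}(\alpha_1, \dots, \alpha_n, m)$ is a minimum over the product set $\Lambda^n$, which in particular contains the ``diagonal'' $\{(\alpha, \dots, \alpha) : \alpha \in \Lambda\}$, and that along this diagonal the binned objective collapses exactly onto the unbinned objective $R_{emp}(\alpha, m)$. Minimizing over the larger set can then only yield a value no larger than minimizing over the diagonal, which is the claimed inequality. In other words, the proposition is at heart a ``restriction of the feasible region'' argument: the unbinned minimizer can always be lifted to the binned problem by using the same parameter in every bin.

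First I would make precise that the subintervals $(b_{k-1}, b_k]$, $k = \overline{1,n}$, are nonoverlapping and together cover $[0,T]$, so that (since every arrival time $t_i$ lies in $[0,T]$) the sets $S_1, \dots, S_n$ form a partition of the observed dataset $\{z_1, \dots, z_m\}$; in particular $\sum_{k=1}^n m_k = m$ and each $z_i$ belongs to exactly one $S_k$. From the definition of the empirical risk restricted to the $k$-th bin we have $m_k R^k_{emp}(\alpha_k, m_k) = \sum_{z_i \in S_k} Q(z_i, \alpha_k)$, with the natural convention that an empty bin ($S_k = \emptyset$, hence $m_k = 0$) contributes $0$ and causes no division issue in (\ref{riskSum}). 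Next I would specialize to $\alpha_1 = \dots = \alpha_n = \alpha$ for an arbitrary fixed $\alpha \in \Lambda$ and compute
\[
\hat{R}_{emp}(\alpha, \dots, \alpha, m) = \frac{1}{m}\sum_{k=1}^n \sum_{z_i \in S_k} Q(z_i, \alpha) = \frac{1}{m}\sum_{i=1}^m Q(z_i, \alpha) = R_{emp}(\alpha, m),
\]
the middle equality being exactly the partition property. Hence $(\alpha, \dots, \alpha)$ is feasible for the left-hand minimization and attains value $R_{emp}(\alpha, m)$, so $\min_{\alpha_1, \dots, \alpha_n \in \Lambda} \hat{R}_{emp}(\alpha_1, \dots, \alpha_n, m) \leq R_{emp}(\alpha, m)$ for every $\alpha \in \Lambda$; taking the minimum over $\alpha$ on the right finishes the proof. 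The hypothesis $n \geq 2$ is not actually required (the argument works for any $n \geq 1$), but it is the regime relevant to the paper's discussion of binning.

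I do not expect a genuine obstacle here. The only points deserving care are (i) verifying the partition property, which relies solely on the knots forming a nonoverlapping exhaustive subdivision of $[0,T]$ together with all $t_i \in [0,T]$; and (ii) handling possibly empty bins so that the weighted sum in (\ref{riskSum}) is unambiguous. I would also append a brief remark that the inequality is in general strict: equality would force the per-bin minimizers $\alpha_k^\ast$ to be simultaneously realizable by a single $\alpha$, which ties back to the paper's message that binning can only lower the empirical (training) risk and hence raises the risk of overfitting when data are limited.
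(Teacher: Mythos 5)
Your proposal is correct and is essentially the paper's own argument: the paper likewise plugs the single global minimizer $\hat{\alpha}$ into every bin and uses the per-bin optimality $R^k_{emp}(\hat{\alpha}_k, m_k) \leq R^k_{emp}(\hat{\alpha}, m_k)$ together with the partition identity you spell out. Your write-up is merely more explicit about the partition property, empty bins, and the final summation step, which the paper leaves implicit.
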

	\begin{proof}
		For each subinterval $k$, let us denote $\hat{\alpha}_k = \arg \min_{\alpha_k \in \Lambda} R^k_{emp} (\alpha_k, m_k)$ and $\hat{\alpha} = \arg \min_{\alpha \in \Lambda} R_{emp}(\alpha, m)$. The inequality in the proposition could be inferred from the following inequality:
		\begin{align*}
			R^k_{emp} (\hat{\alpha}_k, m_k) \leq R_{emp}(\hat{\alpha}, m_k)
		\end{align*}
		for each subinterval $k = \overline{1, n}$.
	\end{proof}
	Proposition 1 shows that dividing the training data into smaller bins likely leads to the decrease of empirical risk (better approximation of training data).
	
	In prior works, the common approach is to utilize the data binning with equal length \citep{bibi-Brown2005, Alizadeh2013, Ferrucci2013} and this is independent of the estimation step. The number of subintervals is typically dependent on the modeler's experience and not data driven. Having a small number of subintervals will lead to a large empirical risk value, while having a larger number of subintervals may improve the empirical risk but may be risky. Here, one of the risks is that training data is sparse and nonhomogeneous in time, therefore, there are almost always empty subintervals. The other risk is generalization and the following questions could be asked:
	\begin{itemize}
		\item \emph{Does the binning on finite data increase the probability of overfitting?}
		\item \emph{How to divide the data into bins to avoid overfitting?}
	\end{itemize}
	To answer the first question, according to \cite{Vapnik2000}, we consider the relationship between a binning method on finite data and the overfitting problem. In this context, we formulate a proposition to show that the generalization of a learning machine depends not only on the empirical risk value and the VC dimension of $F$ (see Section \ref{generalizationoflearning}) but also on the binning method. In detail, if we use the smaller number of observations (i.e., the smaller size of subintervals) for approximation, 
	the error on the training set can be reduced but the error on the testing set may  increase.\\
	We rewrite the right-hand side of  (\ref{boundofrisk}) in the new form
	\begin{align*}
		\overline{R}(\alpha, m) = R_{emp}(\alpha, m) + 2B \xi(m) \left( 1 + \sqrt{1+\frac{R_{emp}(\alpha, m)}{B\xi(m)}} \right)
	\end{align*}
	Then functional (\ref{boundofrisk}) can be written in as follow:
	\begin{align*}
		R(\alpha, m) \leq \overline{R}(\alpha, m) 
	\end{align*}
	\begin{proposition}
		There exists a number $m^*$ such that for all $m < m^*$, we have
		\begin{align*}
			\overline{R}(\alpha, m) \geq \overline{R}(\alpha, m^*)
		\end{align*}
	\end{proposition}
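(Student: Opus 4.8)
The plan is to strip off the dependence on the empirical risk, reduce the asserted inequality to a monotonicity property of the confidence term $\xi(m)$, and then establish that property with a one-line derivative computation.

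First I would rewrite the right-hand side so that the square root is absorbed:
\begin{align*}
	\overline{R}(\alpha, m) = R_{emp}(\alpha, m) + 2B\xi(m) + 2\sqrt{B\xi(m)\bigl(B\xi(m) + R_{emp}(\alpha, m)\bigr)}.
\end{align*}
For fixed $B>0$ and a fixed value $r := R_{emp}(\alpha, m)\ge 0$, the map $s\mapsto r + 2Bs + 2\sqrt{Bs(Bs+r)}$ is continuous, strictly increasing on $(0,\infty)$ (its derivative equals $2B + (2B^{2}s+Br)/\sqrt{Bs(Bs+r)}>2B$), and decreases to $r$ as $s\downarrow 0$. Since the proposition fixes the hypothesis $\alpha$, whose empirical risk $R_{emp}(\alpha, m)$ is, by the law of large numbers, essentially the constant $R(\alpha)$ once $m$ is moderately large, I would treat $R_{emp}(\alpha,\cdot)$ as constant. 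Under this reduction, $\overline{R}(\alpha, m)\ge\overline{R}(\alpha, m^{*})$ becomes equivalent to $\xi(m)\ge\xi(m^{*})$, so it suffices to produce an $m^{*}$ for which $\xi(m)\ge\xi(m^{*})$ whenever $m<m^{*}$.

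Next I would study $\xi(m) = \bigl(h[\ln(2m/h)+1]-\ln(\eta/4)\bigr)/m$. Denoting the numerator by $N(m)$, one has $N$ strictly increasing with $N'(m)=h/m$, whence
\begin{align*}
	\xi'(m) = \frac{h - N(m)}{m^{2}}.
\end{align*}
Thus $\xi$ is strictly increasing on $(0, m_{0}]$ and strictly decreasing on $[m_{0},\infty)$, where $m_{0}=\tfrac{h}{2}(\eta/4)^{1/h}$ is the unique root of $N(m)=h$; it is positive for $m>m_{0}/e$ and $\xi(m)=\frac{h\ln m}{m}+O(1/m)\to 0^{+}$ as $m\to\infty$. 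I would then pick $m^{*}$ in the decreasing branch, large enough that $\xi(m^{*})$ lies below the (positive) infimum of $\xi$ over the short increasing stretch; this is possible precisely because $\xi$ stays positive there while tending to $0$ at infinity. For such an $m^{*}$ every admissible $m<m^{*}$ satisfies $\xi(m)\ge\xi(m^{*})$ — by monotonicity when $m$ is in the decreasing branch, and by the choice of $m^{*}$ otherwise — and combining this with the equivalence from the previous paragraph yields $\overline{R}(\alpha, m)\ge\overline{R}(\alpha, m^{*})$ for all $m<m^{*}$. (In the parameter ranges that occur in practice $m_{0}<1$, so $\xi$ is already decreasing over all admissible sample sizes and the second case is vacuous.)

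The step I expect to be the genuine obstacle is the treatment of $R_{emp}(\alpha, m)$: the clean reduction to a statement about $\xi$ alone is legitimate only if the empirical risk of the fixed hypothesis is held constant — or, more modestly, is shown to move slowly relative to the $O(h\ln m/m)$ decay of $\xi$ — which is exactly the content of the large-sample remark invoked above. The non-monotonicity of $\xi$ at $m_{0}$ is a lesser complication that only forces the existential ("there exists $m^{*}$") phrasing and the harmless restriction to $m$ beyond $m_{0}$.
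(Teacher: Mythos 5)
Your analysis of the confidence term is correct and in fact sharper than the paper's: the exact derivative $\xi'(m) = (h - N(m))/m^{2}$ and the turning point $m_{0} = \tfrac{h}{2}(\eta/4)^{1/h}$ pin down where $\xi$ decreases, whereas the paper only gestures at the monotonicity of $(\ln 2x)/x$ on $[e^{2}/2,\infty)$. (One small slip: $\xi$ is not positive on the whole increasing stretch --- $N(m)<0$ for $m<m_{0}/e$, so $\xi(m)\to-\infty$ as $m\downarrow 0$; for integer sample sizes this is usually harmless, but the ``positive infimum over the increasing stretch'' you invoke is not literally available.)

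The genuine gap is exactly the one you flag yourself: the reduction to ``$\xi(m)\ge\xi(m^{*})$ suffices'' requires $R_{emp}(\alpha,m)\ge R_{emp}(\alpha,m^{*})$ for all $m<m^{*}$, and the law-of-large-numbers heuristic cannot deliver this, because the claim is quantified over all $m<m^{*}$ --- precisely the small-sample regime where $R_{emp}(\alpha,m)$ fluctuates and may drop below $R_{emp}(\alpha,m^{*})$, in which case a smaller $\xi(m^{*})$ does not by itself force $\overline{R}(\alpha,m)\ge\overline{R}(\alpha,m^{*})$. The paper closes this hole with a one-line device you are missing: it \emph{defines} $m^{*} = \arg\min_{m} R_{emp}(\alpha,m)$ (arguing only that $R_{emp}$ is bounded below so a minimizer exists, and taking $m^{*}\gg h$ so that it sits in the decreasing branch of $\xi$). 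With that choice, $R_{emp}(\alpha,m)\ge R_{emp}(\alpha,m^{*})$ holds for every $m$ by definition; combined with $\xi(m)\ge\xi(m^{*})$ for $m<m^{*}$, each of the three terms in your own rewriting $\overline{R} = R_{emp} + 2B\xi + 2\sqrt{B\xi\,(B\xi+R_{emp})}$ --- which is monotone in $\xi$ and in $R_{emp}$ separately --- is no smaller at $m$ than at $m^{*}$. If you replace ``treat $R_{emp}$ as constant'' by this choice of $m^{*}$, your argument goes through and becomes essentially the paper's proof.
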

	\begin{proof}
		Clearly, from the definition of the empirical risk functional, we have $R_{emp}(\alpha, m) \xrightarrow[m \rightarrow \infty]{a.s} R(\alpha, m)$. Therefore, the functional $R_{emp} (\alpha, m)$ is bounded from below. We denote
		\begin{align*}
			m^* = \arg \min_m R_{emp}(\alpha, m),\\
		\end{align*}
		where $m >> h$ (see \citealp{Vapnik2000}).
		Moreover, we have:
		\begin{align*}
			\xi(m) = \frac{h\lbrack\ln{(2m/h)} + 1\rbrack - \ln{(\eta/4)}}{m}
		\end{align*}
		where $0 < \eta < 1$. Thus, we obtain $\ln{\eta/4} < 0$ and the functional $(\ln{2x})/x$ is monotonically decreasing on $[e^2/2, +\infty)$. Therefore, we obtain
		\begin{align*}
			\xi(m) & \left(1 + \sqrt{1 + \frac{R_{emp}(\alpha, m)}{B\xi(m)}} \right) \geq\\
			& \xi(m^*) \left(1 + \sqrt{1+\frac{R_{emp}(\alpha, m^*)}{B\xi(m^*)}}\right)
		\end{align*}
		This proposition is proved.
	\end{proof}
	We see that the number of observations on each subinterval depends on the number of subintervals. Basically, the larger number of subintervals leads to a smaller number of observations on subintervals. Therefore, this proposition specifies that binning the subintervals into smaller ones carries a risk of overfitting.\\
	To illustrate the quality of the learning process depending on the number of subintervals for a tested problem and data described in section \ref{ex}, Fig. \ref{figure-overfitting-SF} shows how the training (solid line) and test (dashed line) errors evolve when the number of subintervals gets increased and the binning method is equal length binning. We could see that the test error increases while the training error steadily decreases starting from 15 subintervals (bins). A typical error graph for the overfitting phenomenon is observed.\\
	\begin{figure}
		\centering
		\includegraphics[width = \linewidth]{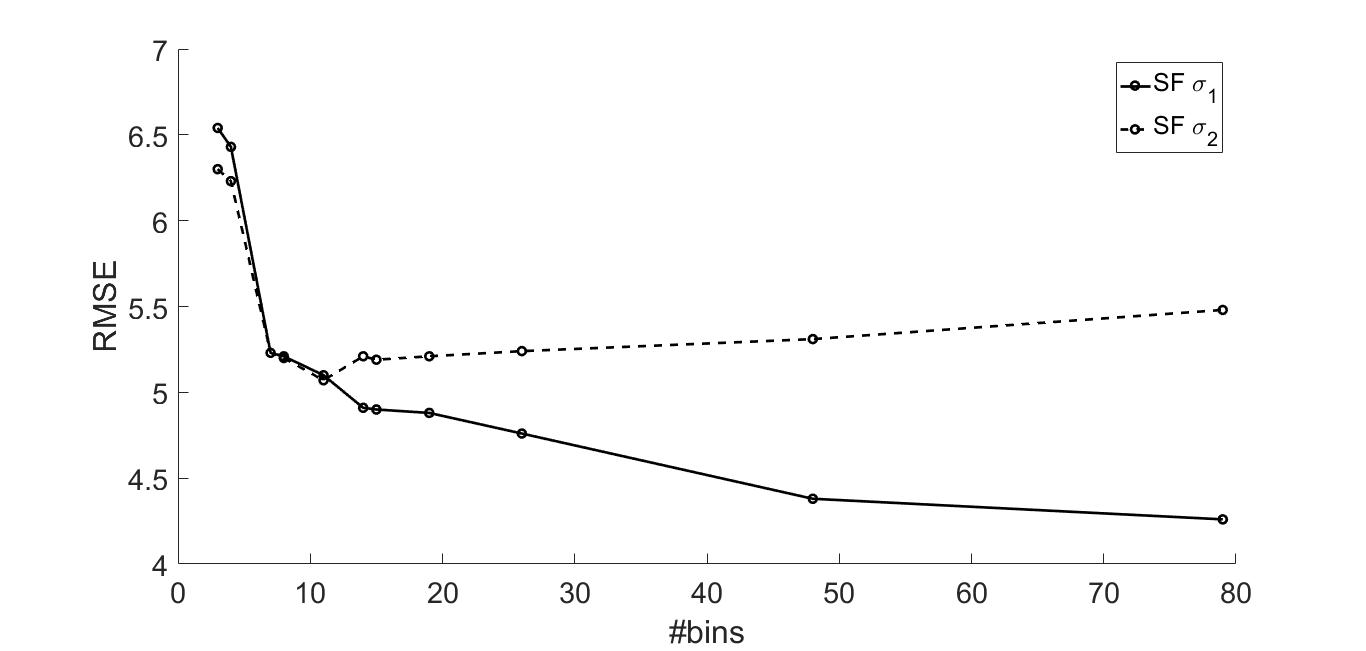}
		\caption{Overfitting in piecewise-polynomial regression on the San Francisco taxi request data (see Section \ref{ex})}
		\label{figure-overfitting-SF}
	\end{figure}
	To answer the second question, we propose, in the next section, two regularization methods for binning the training data, one is based on Ivanov regularization and the other is based on Tikhonov regularization.
	\section{Learning Regularization of NHPPs} \label{algorithm}
	In this section, we propose a general framework for learning NHPPs. Two novel binning methods inspired by Ivanov and Tikhonov regularizations are given. Finally, the general regularized learning algorithm of NHPPs is detailed.
	\subsection{Learning Framework}
	In this part, we propose a process of learning the rate function of a NHPP. Our process is decomposed into two steps: data binning and regression. In the data binning step, samples are placed in discrete bins by a binning method. After that, a regression learning technique is applied. Our learning process is different from the literature in that the binning step is not conducted one time but repeatedly with the regression step. This makes our binning step more adaptive to the training data. The whole learning process is illustrated in Fig. \ref{figure-proposedProcess}.\\
	\begin{figure}
		\centering
		\includegraphics[width = \linewidth]{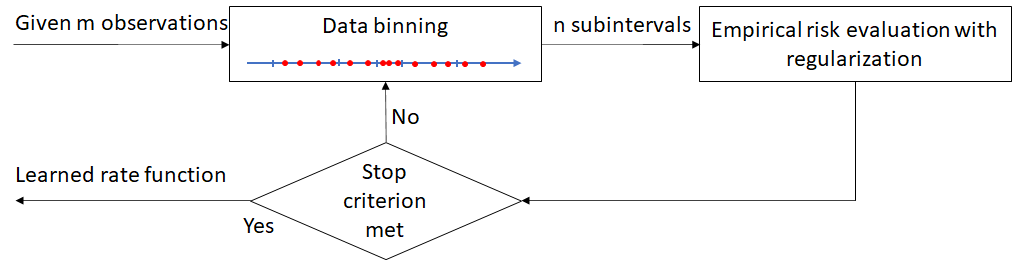}
		\caption{The proposed learning framework}
		\label{figure-proposedProcess}
	\end{figure}
	
	\subsection{Data Binning Methods}
	As previously discussed, binning is important for learning/ estimation of NHPPs. While having too few bins could result in bad estimations of NHPPs (underfitting), using too many bins could risk overfitting. In general, a learning/estimation of NHPPs is depend on not only the number of bins but also the way of dividing the data into bins. However, to the best of our knowledge, all of the previous works on estimation/learning of NHPPs have sidelined this problem and taken ad-hoc, non-data-driven approaches in choosing the way to divide data into bins (e.g., divide data into equal-length intervals with a predefined bin number). In this subsection, we propose two adaptive and data driven binning methods that try to take into account the learning regularization.  
	
	\subsubsection{Ivanov regularization based data binning method}
	Based on the idea of Ivanov regularization, we propose a general data division algorithm using distribution tests for breaking up the considered time interval into subintervals. 
	The distribution test plays the role of the constraint in Ivanov regularization, which is used to restrict the bins to the set of subintervals that respect the Poisson property of training data. The details of the algorithm are shown in Algorithm \ref{sim:dividedGeneralAlgo}.  In the algorithm, $dP_{best}$ indicates the best solution found during the search. It divides an interval into subintervals (lines 2-3) by a uniformly random point (line 1). Then, the Poisson property on each subinterval is tested using the Log Test \citep{bibi-Brown2005} (line 4). Lines 5-13 update the best training error and recursively work on the newly created subintervals. Lines 14-18 continue the division if the stopping criterion (for instance, the recursive depth or the total number of bins) is not met.\\

	\begin{algorithm}
		\caption{Recursive data binning method based on the Ivanov regularization}
		\label{sim:dividedGeneralAlgo}
		\begin{algorithmic}[1]
			\PROCEDURE{\textsc{dividedInterval}}{$(l, u, O, dP)$}
			\Require
				\Statex l: lower bound of interval
				\Statex u: upper bound of interval
				\Statex O = $\{ t_{1}, ..., t_m \}$: arrival-time points
				\Statex dP: knots
			\Ensure minR, $dP_{best}$: the best-found risk value and knot list
			\State $p \leftarrow$ random number in U $\simeq$ $\mathrm{Uniform}(l, u)$
			\State $O_1 \leftarrow \{t_{i} \emph{ }| \emph{ } t_i \in O \wedge l \leq t_{i} < p \}$
			\State $O_2 \leftarrow \{t_{i} \emph{ }| \emph{ } t_i \in O \wedge p \leq t_{i} < u \}$
			\State Using a distribution test for testing Poisson property on $O_1$ and $O_2$
			\If{Both $O_1$ and $O_2$ are passed the distribution test}
				\State $dP \leftarrow dP \cup {p}$
				\State Let $er$ be the value of risk functional (\ref{riskSum})
				\If{$er \leq minR$}
					\State $minR \leftarrow er$
					\State $dP_{best} \leftarrow dP$
				\EndIf
				\State \Call{dividedInterval}{$l, p, O_1, dP$}
				\State \Call{dividedInterval}{$p, u, O_2, dP$}
			\Else
				\If{stop-criterion not met}
					\State \Call{dividedInterval}{$l, u, O, dP$}
				\EndIf
			\EndIf
			\ENDPROCEDURE
		\end{algorithmic}
	\end{algorithm}
	
	\subsubsection{Data binning method based on the idea of Tikhonov regularization}
	Based on the idea of Tikhonov regularization, we propose a binning method where an additional regularized term is added to the empirical risk function. This term is used to control the number of subintervals based on their lengths. The empirical risk functional (\ref{riskSum}) is reformulated as:
	\begin{align} \label{riskTikhonov}
		\hat{R}_{emp} (\alpha_1, \dots, \alpha_n, m) = & \frac{1}{m}\Sigma_{k=1}^{n} m_k R^k_{emp} (\alpha_k, m_k) \nonumber\\
		& + \gamma \Sigma_{k=1}^{n-1} \delta_{k}
	\end{align}
	where $\delta_{k} = m_k R^{k}_{emp} (\alpha_k, m_k) / (b_k - b_{k-1})$ is the risk value scaled by the length of subinterval $k^{th}$ found at the previous division step.  When we increase the number of subintervals, in general, the length of subintervals and the risk value are decreased. However, the decreasing rate of the risk value is usually slower than that of the subinterval length. Thus, when the number of subintervals is large enough, the penalty term tends to increase. This form of regularization term represents the momentum of the empirical risk function related to the time interval after each division iteration.
	The proposed idea is depicted in Algorithm \ref{sim:dividedGeneralAlgoTikhonov}. Line 1 generates a divided point randomly. Line 3 computes the new empirical risk value. Lines 4-7 save the best solution found. Lines 8-11 continue the division if the stopping criterion (for instance, the recursive depth or the total number of bins) is not met.\\
	
	\begin{algorithm}
		\caption{Recursive data binning method based on the Tikhonov regularization}
		\label{sim:dividedGeneralAlgoTikhonov}
		\begin{algorithmic}[1]
			\PROCEDURE{\textsc{dividedInterval}}{$(l, u, dP)$}
			\Require
				\Statex l: lower bound of interval\;
				\Statex u: upper bound of interval\;
				\Statex dP: knots\;
			\Ensure minR, $dP_{best}$: the best-found risk value and knot list
			\State $p \leftarrow$ random number in U $\simeq$ $\mathrm{Uniform}(l, u)$\;
			\State $dP \leftarrow dP \cup {p}$\;
			\State Let $er$ be the value of risk functional (\ref{riskTikhonov})\;
			\If{$er \leq minR$}
				\State $minR \leftarrow er$\;
				\State $dP_{best} \leftarrow dP$
			\EndIf
			\If{stop-criterion not met}
				\State \Call{dividedInterval}{$l, p, dP$}
				\State \Call{dividedInterval}{$p, u, dP)$}
			\EndIf
			\ENDPROCEDURE
		\end{algorithmic}
	\end{algorithm}
		
	
	\subsection{A Proposed Algorithm for Estimating the Rate Function of NHPPs}
	Algorithm \ref{sim:binning}  is an iterative algorithm for estimating the rate function of a NHPP. It repeatedly calls Algorithm \ref{sim:dividedGeneralAlgo} (or Algorithm \ref{sim:dividedGeneralAlgoTikhonov}) until the stopping criterion is met. In the algorithm, $minR$ and $dP_{best}$ indicate the best risk value and binning solution found during the search.
	\begin{algorithm}
		\caption{Algorithm for learning the NHPP rate function}
		\label{sim:binning}
		\begin{algorithmic}[1]
			\PROCEDURE{\textsc{dividedInterval}}{}
			\Require
				\Statex [0, T]: considered time interval
				\Statex O = $\{ t_1, ..., t_m \}$: arrival-time points
			\Ensure minR, $dP_{best}$: the best-found risk value and knot list
			\State Initialize $dP_{best} \leftarrow dP \leftarrow \{\emptyset\}$\;
			\State $minR \leftarrow \infty$\;
			\While{stop-criterion not met}
				\State $dP \leftarrow \Call{dividedInterval}{0, T, O, dP}$
				\State Let $er$ be the value of risk functional (\ref{riskSum})
				\If{$er < minR$}
					\State $minR \leftarrow er$\;
					\State $dP_{best} \leftarrow dP$\;
				\EndIf
				\State $dP \leftarrow \{ \emptyset \}$\;
			\EndWhile
			\ENDPROCEDURE
		\end{algorithmic}
	\end{algorithm}
		
	\section{Experiments and Results} \label{ex}
	\subsection{Selection of Hypothesis Space}
	To test the effect of binning on learning generalization of NHPPs, we select hypothesis spaces with fixed VC-dimensions in each experiment. This practice is common in the literature of NHPP estimation \citep{Massey1996, Leemis2004, bibi-Brown2005, Alizadeh2008, Alizadeh2013}. The two most popular function spaces for NHPP estimation in the literature are selected, namely, the spaces of linear and polynomial functions \citep{Massey1996, Alizadeh2013}.
	
	\subsection{Tested Problems and Data}
	\label{expproblemdata}
	In the experiments, we have chosen some well-known NHPPs learning/estimation problems from the literature that model different kinds of events such as daily taxi call arrivals (e.g., \citealp{SonNV, Yoon2019, Liu2019}) and telephone call arrivals at customer service call centers (e.g., \citealp{bibi-Brown2005, Shen2008, Xu2017}). For these problems, the true rate functions are unknown and must be learned/ approximated from data.
	Besides, we also consider the problem of learning a NHPP from a synthetic dataset, which was randomly generated  with a known piecewise-linear rate function. These problems and their data are described as follows.
	\begin{itemize}
		\item Problem 1 (AF): Learning the arrival rate of randomly generated requests coming from a NHPP. The synthetic dataset is generated by a standard algorithm described in \cite{Ross2013} with the piecewise-linear rate function $\lambda(t)$ as follows:
		\begin{align*}
			\lambda(t) = 
			\begin{cases}
				\cfrac{13}{36} \times t + 7, & \quad \text{if } 0 \leq t < 10800,\\
				\cfrac{5}{36} \times t + 15, & \quad \text{if } 10800 \leq t < 21600,\\
				\cfrac{25}{36} \times t - 25, & \quad \text{if } 21600 \leq t < 32400,\\
				-\cfrac{1}{2} \times t + 104, & \quad \text{if } 32400 \leq t < 43200,\\
				-\cfrac{1}{18} \times t + 40, & \quad \text{if } 43200 \leq t < 54000,\\
				\cfrac{1}{3} \times t - 30, & \quad \text{if } 54000 \leq t < 64800,\\
				-\cfrac{4}{9} \times t + 138, & \quad \text{if } 64800 \leq t < 75600,\\
				-\cfrac{5}{9} \times t + 166, & \quad \text{if } 75600 \leq t < 86400,\\
				0, \quad \text{otherwise.}
			\end{cases}
		\end{align*}
		Subintervals of lengths are equal, using known periodic arrival rate functions. For each day, the number of requests is randomly chosen in [7000, 8000].
		\begin{itemize}
			\item Training dataset: 365 days.
			\item Testing dataset: 31 days.
		\end{itemize}
		\item Problem 2 (SF): Learning the arrival rate of San Francisco taxi requests.
		\begin{itemize}
			\item Data: Dataset of mobility traces of taxi cabs in San Francisco, US. It contains GPS coordinates of approximately 564 taxis collected over 31 days. The average number of requests per day is around 14.255. The time calls are specified to be the pickup time subtracting 10 minutes and clustered every 5 minutes.
			\item Link for the dataset: \emph{http://cabspotting.org/}
			\item Training dataset: The data collected over 24 days in 03/2010.
			\item Testing dataset: The data  collected over 7 days in 03/2010.
		\end{itemize}
		\item Problem 3 (PT): Learning the arrival rate of Porto taxi requests.
		\begin{itemize}
			\item Data: An accurate dataset describes a complete year of the trajectories for all of the 442 taxis running in the city of Porto, Portugal. These taxis operate through a taxi dispatch central and serve 1710589 trips in a year from 01/07/2013.
			\item Link for the dataset: \emph{https://www.kaggle.com}
			\item Training dataset: The data  collected from from 01/07/2013 to 30/05/2014.
			\item Testing dataset: The data  collected over 30 days in 06/2014.
		\end{itemize}
		\item Problem 4 (IB): Learning the arrival rate of calls to a call-center at a small Israeli bank.
		\begin{itemize}
			\item Data: detailed description of the data could be found in  \cite{Mandelbaum2001}. There are 135780 calls arrived in four months. Each record includes the time stamp for when the call arrived. The average number of arrival calls per minute is 1.1.
			\item Link for the dataset: \emph{http://ie.technion.ac.il/serveng/callcenterdata/inde}-\emph{x.html}
			\item Training dataset: The data  collected from from 01/01/1999 to 30/04/1999.
			\item Testing dataset: The data collected over 30 days in 05/1999.
		\end{itemize}
	\end{itemize}
	In this paper, we conduct three main experiments. 
	\subsubsection{Experiment 1}
	\begin{itemize}
		\item \emph{Description}: The first experiment is to confirm that the learning with data binning on  finite data can carry the probability of being overfitting.
		\item \emph{Data}: The SF, PT, IB, and AF datasets are used.
	\end{itemize}
	\subsubsection{Experiment 2}
	\begin{itemize}
		\item \emph{Description}: In the second experiment, we perform our proposed algorithms to evaluate the efficiency of learning regularization for NHPPs.
		\item \emph{Data}: The SF, PT, IB and AF datasets are used.
	\end{itemize}
	\subsubsection{Experiment 3}
	\begin{itemize}
		\item \emph{Description}: In the last experiment, we test our proposed algorithms for a more general case in real life, i.e., Dynamic Vehicle Routing Problem (DVRP) with spatio-temporal data.
		\item \emph{Data}: The SF dataset is used.
	\end{itemize}

	\subsection{Training and Testing Processes}
	\label{expproblemtraining}
	In our experiments, the range of considered time $[0, T]$ is set by day unit. The arrival rate functions depend on the time of day (i.e., we assume that requests of different dates come from the same distribution). Therefore, the number of arrival events is a result of repeated day-to-day observations. For each day in the training dataset, the number of requests coming at a time of day (the arrival time is rounded by minute) is considered as a data point. 
	To avoid an ecological fallacy, all those data points are used to train the model (see \citealp{Garrett2002}). As illustrated in Fig. \ref{figure-traintestprocess}, the number of values on the vertical axis (possibly equal) at each value on the horizontal axis is equal to the number of days in the considered dataset. The training process minimizes the \emph{Root Mean Squared Error} (RMSE) on all those data points. The testing process is performed similarly on the test dataset.
	\begin{figure}
		\centering
		\includegraphics[width = \linewidth]{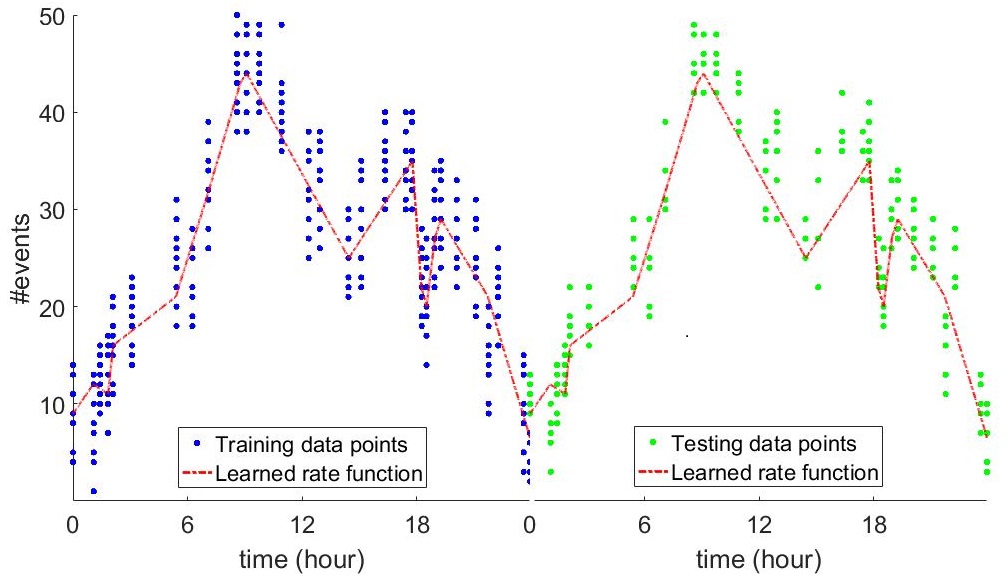}
		\caption{An example of the training and testing processes}
		\label{figure-traintestprocess}
	\end{figure}
	\subsection{Experiment 1: Overfitting Problem}
	To show the possibility of being overfitting when using data binning, we conducted the experiments of learning NHPPs with increasing number of bins. It could simply be done by relaxing the constraint that requires the Poisson property in Algorithm \ref{sim:dividedGeneralAlgo}. In line 6, if either of the two datasets $O_1, O_2$ does not pass the distribution test, and its length is not bigger than $2\eta$ (minute unit), the division on dataset $O$ is not performed. This ensures that the number of bins is finite as common in the literature. The piecewise-polynomial regressions with different $\eta$ were performed. The RMSE training ($\sigma_1$) and testing ($\sigma_2$) errors  for all tested problems are presented in Fig. \ref{experiments-Figure-overfitting}. Table \ref{experiments-Table-overfitting} shows the detailed numerical results on the SF dataset. 
	\begin{table}[h]
		\caption{Results of the piecewise-polynomial regression with the relaxed division algorithm.}
		\label{experiments-Table-overfitting}
		\begin{tabular*}{.7\textwidth}{@{\extracolsep\fill}lccc}
				\toprule
				Ins	& $\#$b & $\sigma_1$ & $\sigma_2$\\
				\midrule
				$\eta$ = 600 & 3 & 6.54 & 6.3\\
				\midrule
				$\eta$ = 480 & 4 & 6.43 & 6.23\\
				\midrule
				$\eta$ = 120 & 7 & 5.23 & 5.23\\
				\midrule
				$\eta$ = 100 & 8 & 5.21 & 5.2\\
				\midrule
				$\eta$ = 80 & 11 & 5.1 & 5.07\\
				\midrule
				$\eta$ = 60 & 14 & 4.91 & 5.21\\
				\midrule
				$\eta$ = 50 & 15 & 4.9 & 5.19\\
				\midrule
				$\eta$ = 40 & 19 & 4.88 & 5.21\\
				\midrule
				$\eta$ = 30 & 26 & 4.76 & 5.24\\
				\midrule
				$\eta$ = 20 & 48 & 4.38 & 5.31\\
				\midrule
				$\eta$ = 10 & 79 & 4.26 & 5.48\\
				\bottomrule
			\end{tabular*}
	\end{table}
	\begin{figure}
		\centering
		\includegraphics[width = \linewidth]{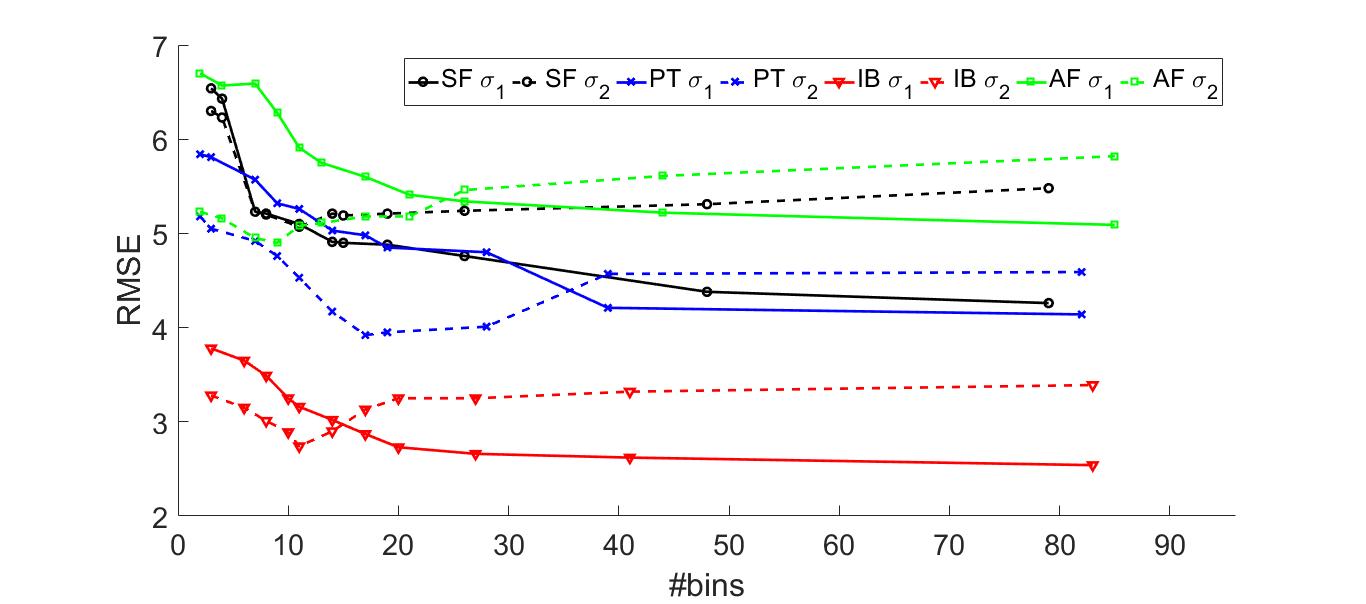}
		\caption{Overfitting problem in piecewise-polynomial regression}
		\label{experiments-Figure-overfitting}
	\end{figure}
	It can be seen that when the number of subintervals ($\#$b) gets increased, initially, both $\sigma_1$ and $\sigma_2$ errors steadily decrease. However, starting from 14 bins, the training error decreases slowly and the test error gets increased faster. This behavior is commonly attributed to the overfitting process in the literature of machine learning. The same is true for all other tested problems (see Table \ref{experiments-Table-overfitting-Apendix} in the Appendix) and it is consistent with the theoretical result shown in subsection \ref{LearningG}.
	
	
	
	\subsection{Experiment 2: Learning Regularization of NHPPs}
	We denote the RMSE on the training and the test datasets as $\sigma_1$ and $\sigma_2$, respectively. The number of subintervals found by Algorithm \ref{sim:binning} is denoted as $\#b$. For comparison, both regularized data binning method based on Ivanov ($\textit{DBM}_I$) and Tikhonov ($\textit{DBM}_T$) regularization were performed. We also conducted the common data binning method in the literature, where the considered interval is divided into $\#b$ subintervals with equal length. $\overline{\sigma_1}$ and $\overline{\sigma_2}$ denote the RMSEs on the training and test datasets found by the estimation process. Notation $\rho$ presents the improvement percentage of $\sigma_2$ found by the regression on subintervals with our binning method compared to $\overline{\sigma_2}$ found by the regression on subintervals with equal length.
	\subsubsection{The piecewise-linear and linear regression}
	Table \ref{experiments2-linear} shows the results of learning by linear (\emph{Unbinned} column) and piecewise-linear functions. It is undoubted that binning is essentially important for learning NHPPs with both training and test errors are significantly lower on all tested problems as shown in the table. The results also show that adaptive binning with regularization helps to improve on test errors from 10.73\% to 25.57\% compared to equal length binning. For the two regularization methods, Ivanov regularization ($\textit{DBM}_I$) gives better results than Tikhonov regularization ($\textit{DBM}_T$) on the AF, PT and IB problems and worse on SF. The result illustrations are placed in the Appendix for the convenience of reference (Figures \ref{Apendix-figure-regressionAFD-linear}-\ref{Apendix-figure-regressionIB-poly}).
	
	
	\begin{table}
		\caption{The regression results of the rate function approximated by linear and piecewise-linear functions.}
		\label{experiments2-linear}
		\begin{tabular*}{\textwidth}{@{\extracolsep\fill}|c|c|c|c|c|c|c|c|c|c|c|c|c|c|c|}
			\cmidrule{1-15}
				\multirow{2}{*}{Ins}	& \multicolumn{2}{@{}c|@{}}{Unbinned} & \multicolumn{6}{@{}c|@{}}{Binning with $\textit{DBM}_I$} & \multicolumn{6}{@{}c|@{}}{Binning with $\textit{DBM}_T$}\\
				\cmidrule{2-15} 
				& $\sigma_1$ & $\sigma_2$ & $\sigma_1$ & $\sigma_2$ & $\#b$ & $\overline{\sigma_1}$ & $\overline{\sigma_2}$ & $\rho$ & $\sigma_1$ & $\sigma_2$ & $\#b$ & $\overline{\sigma_1}$ & $\overline{\sigma_2}$ & $\rho$\\
				\cmidrule{1-15}
				AF	&	8.89	&	10.6	&	4.18	&	\textbf{4.74}	&	10	&	5.17	&	5.31	&	10.73	&	4.75	&	4.86	&	12	&	5.21	&	5.46	&	10.98	\\
				SF	&	11.1	&	13.4	&	3.2	&	4.27	&	12	&	5.02	&	5.14	&	16.93	&	3.16	&	\textbf{3.87}	&	13	&	4.96	&	5.2	&	25.57	\\
				PT	&	5.43	&	5.69	&	2.82	&	\textbf{3.28}	&	15	&	4.73	&	4.05	&	19.01	&	2.96	&	3.84	&	11	&	4.92	&	4.47	&	14.09	\\
				IB	&	5.74	&	6.06	&	2.42	&	\textbf{2.73}	&	11	&	2.78	&	3.19	&	14.42	&	2.41	&	2.81	&	9	&	3.12	&	3.36	&	16.36	\\
				\cmidrule{1-15}
			\end{tabular*}
	\end{table}
	
	\subsubsection{The polynomial and piecewise-polynomial regression}
	In this part, the hypothesis spaces are polynomial (unbinned) and piecewise-polynomial functions (binning). Table \ref{table:nonlinearReg} presents the quality improvement of algorithms using regularizations $\textit{DBM}_I$ and $\textit{DBM}_T$. 
	\begin{table}
		\caption{The regression results of the rate function approximated by polynomial and piecewise-polynomial functions.}
		\label{table:nonlinearReg}
			\begin{tabular*}{\textwidth}{@{\extracolsep\fill}|c|c|c|c|c|c|c|c|c|c|c|c|c|c|c|}
				\cmidrule{1-15}
				\multirow{2}{*}{Ins}	& \multicolumn{2}{@{}c|@{}}{Unbinned} & \multicolumn{6}{@{}c|@{}}{Binning with $\textit{DBM}_I$} & \multicolumn{6}{@{}c|@{}}{Binning with $\textit{DBM}_T$}\\
				\cmidrule{2-15} 
				& $\sigma_1$ & $\sigma_2$ & $\sigma_1$ & $\sigma_2$ & $\#$b & $\overline{\sigma_1}$ & $\overline{\sigma_2}$ & $\rho$ & $\sigma_1$ & $\sigma_2$ & $\#$b & $\overline{\sigma_1}$ & $\overline{\sigma_2}$ & $\rho$\\
				\cmidrule{1-15}
				AF	&	5.8	&	5.97	&	4.84	&	\textbf{4.91}	&	8	&	5.33	&	5.38	&	8.74	&	4.78	&	4.96	&	12	&	5.27	&	5.49	&	9.65	\\
				SF	&	3.69	&	4.41	&	3.02	&	\textbf{3.56}	&	15	&	3.34	&	4.33	&	17.78	&	2.34	&	3.69	&	14	&	3.41	&	4.38	&	15.75	\\
				PT	&	3.15	&	4.72	&	1.91	&	\textbf{3.9}	&	20	&	3.02	&	4.62	&	15.58	&	1.91	&	3.98	&	12	&	2.88	&	4.53	&	12.14	\\
				IB	&	2.39	&	3.38	&	1.98	&	\textbf{2.63}	&	16	&	2.17	&	3.12	&	15.71	&	2.06	&	2.69	&	13	&	2.24	&	2.84	&	5.28	\\
				
				\cmidrule{1-15}
			\end{tabular*}
	\end{table}
	The results again confirm the importance of data binning and the regularization approaches. $\textit{DBM}_I$ has the best test errors on all tested problems improving over those of equal length binning method from 8.74\% to 17.78\%. Comparing with the results in Table \ref{experiments2-linear}, 
	the piecewise-polynomial regression on PT and AF datasets are worse the piecewise-linear regression. A natural explanation is that the rate functions of these datasets come from piecewise-linear functions (certainly for AF) or the choice of the degree of polynomial (PT). 
	
	Overall, the experimental results clearly indicate the effectiveness of regularization methods over the vanilla equal length binning method.
	Of the two methods, $\textit{DBM}_I$ seems to be the best. However, $\textit{DBM}_T$ might still be useful since it has lower computational time (without repeatedly performing statistical tests) and for the case of high dimensional NHPPs, where the efficient statistical tests are not available.
	
	
	\subsection{Experiment 3: Dynamic Vehicle Routing Problem (DVRP) with Spatio-Temporal Data}
	\begin{figure}
		\centering
		\includegraphics[width = 3in]{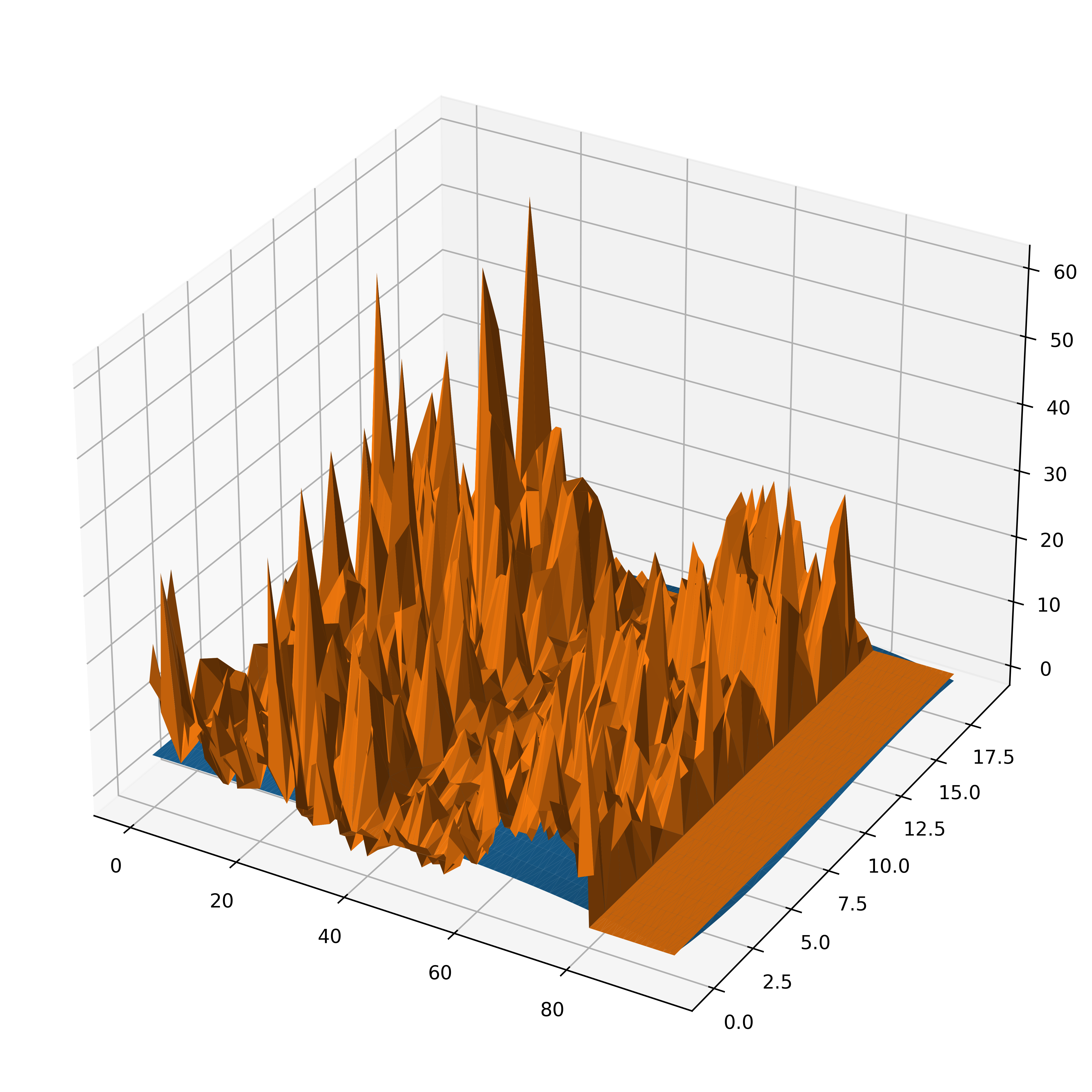}
		\caption{The result on the SF dataset with 20 sub-areas}
		\label{figure-poly-96-20-SF}
	\end{figure}
	\begin{table}
		\caption{The profit of scheduling algorithm using our proposed learning method.}
		\label{experiments:applicationTable}
		\begin{tabular*}{.7\textwidth}{@{\extracolsep\fill}lccc}
			\toprule
				Ins	&	$p_1$    &	$p_2$ & $\rho (\%)$\\
				\midrule
				Day	1	&	27.44	&	\textbf{29.14}	&	6.19	\\
				Day	2	&	30.39	&	\textbf{32.11}	&	5.66	\\
				Day	3	&	31.71	&	\textbf{33.98}	&	7.16	\\
				Day	4	&	29.62	&	\textbf{31.03}	&	4.76	\\
				Day	5	&	29.8	&	\textbf{31.61}	&	6.07	\\
				Day 6   &   29.24   &   \textbf{30.15}   &   3.11    \\
				Day 7   &   29.62   &   \textbf{31.17}   &   5.23    \\
				Day 8   &   28.56   &   \textbf{29.72}   &   4.06    \\
				Day 9   &   10.16   &   \textbf{10.85}   &   6.79    \\
				\bottomrule
			\end{tabular*}
		
	\end{table}
	In this experiment, we evaluate the benefit of using our regularized learning of NHPPs for a recent DVRP application. In the DVRP, a spatio-temporal process can be reduced to temporal processes by a clustering algorithm as proposed in \cite{Bent2003}. A large geographical area can be clustered into non overlap sub-areas by the $K$-means clustering algorithm. Most of recent works on DVRP have utilized this approach, for instance, \cite{Ferrucci2013, SonNV}. In \cite{SonNV}, the authors consider the share-a-ride taxi scheduling problem and assume that the arrival rate of taxi requests only depend on period and area and hence are fixed overall days. The city is partitioned into a set of sub-areas. Each day is binned into subintervals with equal lengths. The number of arrival requests on each sub-area in a subinterval is generated by a Poisson distribution. In this experiment, we change the learning method in \cite{SonNV} by our proposed learning process using the regularization method $DBM_I$ to learn the arrival rate of taxi requests. In our experiment, the arrival rate of taxi requests is learned and used to suggest the best route from the last drop-off point to a parking lot for each taxi. Fig. \ref{figure-poly-96-20-SF} relates the piecewise-polynomial regression result on the SF dataset in which the number of sub-areas is equal to 20 ($K = 20$). Table \ref{experiments:applicationTable} shows the profit of scheduling algorithms. We denote the $\textit{OTSF-DP}$ algorithm in \cite{SonNV} by $p_1$ and the $\textit{OTSF-DP}$ using our new predicted rate function by $p_2$. The percentage of improvement is denoted by $\rho = (p_2 - p_1)100/p_1$. It can be seen that by using the new predicted rate function obtained via regularized learning approach in the paper, the profit is improved significantly for all nine tested days. The percentage of improvement is from 3.11\% (909 USD per day) to 7.16\% (2270 USD per day), which is of significantly economical values.
	
	\section{Conclusion} \label{conclusion}
	In this paper, we formulate the problem of estimation of NHPPs from finite and limited data as a machine learning problem. We have theoretically and experimentally shown that while the binning of observed data is essential for learning/estimating NHPPs, it could lead to overfitting. Consequently, we have proposed a learning framework of NHPPs that utilizes two regularization methods based on ideas of the Tikhonov \citep{Tikhonov} and Ivanov regularization \citep{Ivanov}. We tested our regularization learning methods on synthetic and real-world datasets and the experimental results show the effectiveness of these methods. The learning methods were further tested by embedding their results in a recent real-world application of share-a-ride taxi scheduling using spatio-temporal data. Our new regularized learning methods help to significantly improve the performance of one of the state-of-the-art dynamic scheduling algorithms producing better economical profits. Moreover, the new regularized learning methods completely remove the need of ad-hoc parameter tuning in binning (such as the number of bins) by a data driven binning method.

	In future, we will extend our approach to learning high dimensional NHPPs and extend the VC-theory to fully cover the statistical learning of NHPPs.

\backmatter

\section*{Declarations}

\begin{itemize}
\item Funding: The authors did not receive support from any organization for the submitted work
\item Conflict of interest/Competing interests: The authors have no competing interests to declare that are relevant to the content of this article.
\item Ethics approval: Not applicable
\item Consent to participate: Not applicable
\item Consent for publication: Not applicable
\item Availability of data and materials: The instances and our results are readily available on \emph{https://github.com/sonnv188/poissonprocess.git}
\item Code availability: The instances and our results are readily available on \emph{https://github.com/sonnv188/poissonprocess.git}
\item Authors' contributions: All authors contributed to the study conception and design. Conceptualization, methodology and validation was proposed by Hoai Nguyen Xuan. Material preparation, implementing and data analysis were performed by Son Nguyen Van. The first draft of the manuscript was written by Son Nguyen Van and all authors commented on previous versions of the manuscript. All authors read and approved the final manuscript.
\end{itemize}

\begin{appendices}

\section{Learning Results}\label{secA1}
\begin{table}
		\caption{Results of the piecewise-polynomial regression with the relaxed division algorithm}
		\label{experiments-Table-overfitting-Apendix}
		\begin{tabular*}{\textwidth}{@{\extracolsep\fill}|c|c|c|c|c|c|c|c|c|c|}
			\cmidrule{1-10}
			\multirow{2}{*}{Ins} & \multicolumn{3}{c|}{AF} & \multicolumn{3}{c|}{PT} & \multicolumn{3}{c|}{IB}\\
			\midrule{2-10}
			& $\#$b & $\sigma_1$ & $\sigma_2$ & $\#$b & $\sigma_1$ & $\sigma_2$ & $\#$b & $\sigma_1$ & $\sigma_2$\\
			\hline
			$\eta$	=	600	&	2	&	6.7	&	5.23	&	2	&	5.84	&	5.18	&	3	&	3.78	&	3.28	\\
			$\eta$	=	480	&	4	&	6.57	&	5.16	&	3	&	5.81	&	5.05	&	6	&	3.65	&	3.15	\\
			$\eta$	=	120	&	7	&	6.59	&	4.95	&	7	&	5.57	&	4.92	&	8	&	3.49	&	3.01	\\
			$\eta$	=	100	&	9	&	6.28	&	4.9	&	9	&	5.32	&	4.76	&	10	&	3.25	&	2.89	\\
			$\eta$	=	80	&	11	&	5.91	&	5.08	&	11	&	5.26	&	4.53	&	11	&	3.16	&	2.74	\\
			$\eta$	=	60	&	13	&	5.75	&	5.11	&	14	&	5.03	&	4.17	&	14	&	3.02	&	2.9	\\
			$\eta$	=	50	&	17	&	5.6	&	5.18	&	17	&	4.98	&	3.92	&	17	&	2.87	&	3.13	\\
			$\eta$	=	40	&	21	&	5.41	&	5.18	&	19	&	4.85	&	3.95	&	20	&	2.73	&	3.25	\\
			$\eta$	=	30	&	26	&	5.34	&	5.46	&	28	&	4.8	&	4.01	&	27	&	2.66	&	3.25	\\
			$\eta$	=	20	&	44	&	5.22	&	5.61	&	39	&	4.21	&	4.57	&	41	&	2.62	&	3.32	\\
			$\eta$	=	10	&	85	&	5.09	&	5.82	&	82	&	4.14	&	4.59	&	83	&	2.54	&	3.39		\\
			\cmidrule{1-10}
		\end{tabular*}
\end{table}

\begin{figure}[H]
	\centering
	\subfloat[$\textit{DBM}_I$]{\includegraphics[width=0.5\textwidth]{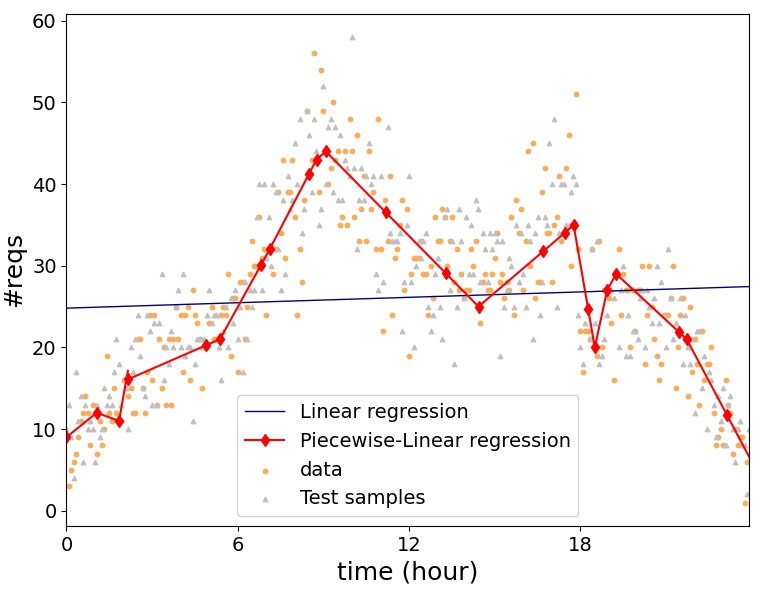}%
		\label{Apendix-figure-regressionAFD-linear-Ivanov}}
	\hfil
	\subfloat[$\textit{DBM}_T$]{\includegraphics[width=0.5\textwidth]{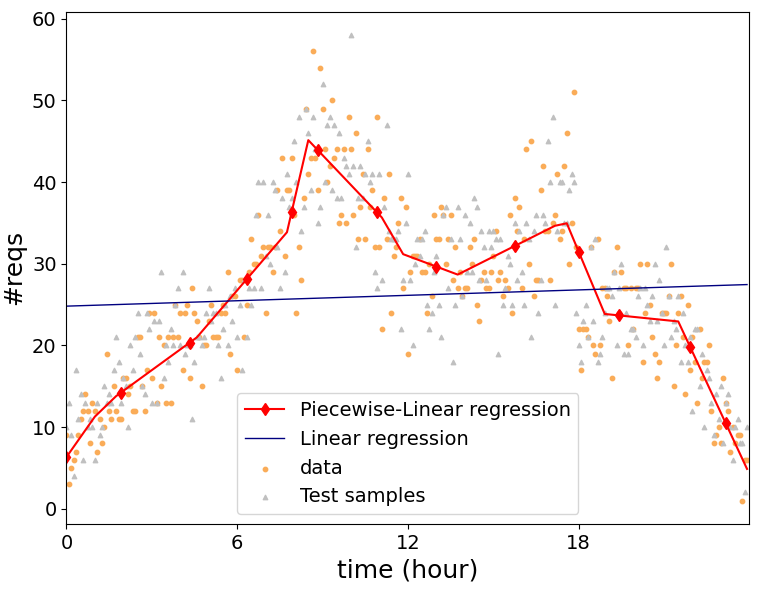}%
		\label{Apendix-figure-regressionAFD-linear-Tikhonov}}
	\caption{The piecewise-linear regression of the arrival rate function on the AF dataset}
	\label{Apendix-figure-regressionAFD-linear}
\end{figure}

\begin{figure}[H]
	\centering
	\subfloat[$\textit{DBM}_I$]{\includegraphics[width=0.5\textwidth]{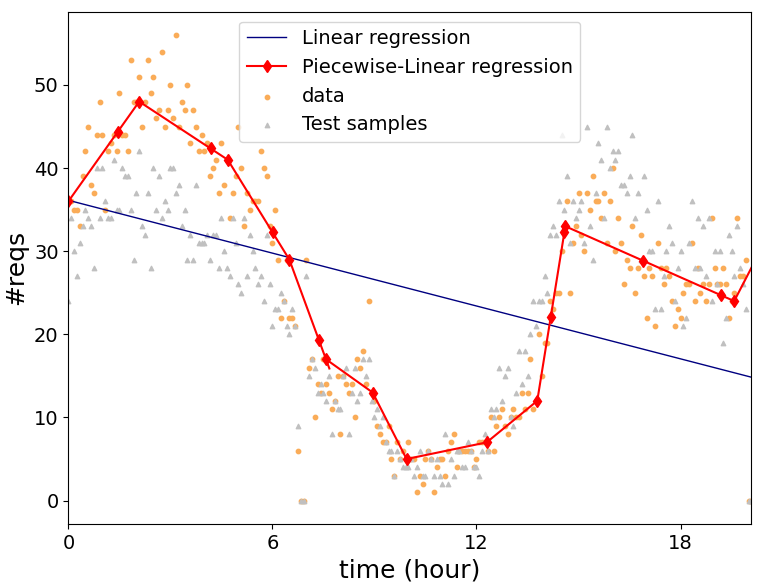}%
		\label{Apendix-figure-regressionSF-linear-Ivanov}}
	\hfil
	\subfloat[$\textit{DBM}_T$]{\includegraphics[width=0.5\textwidth]{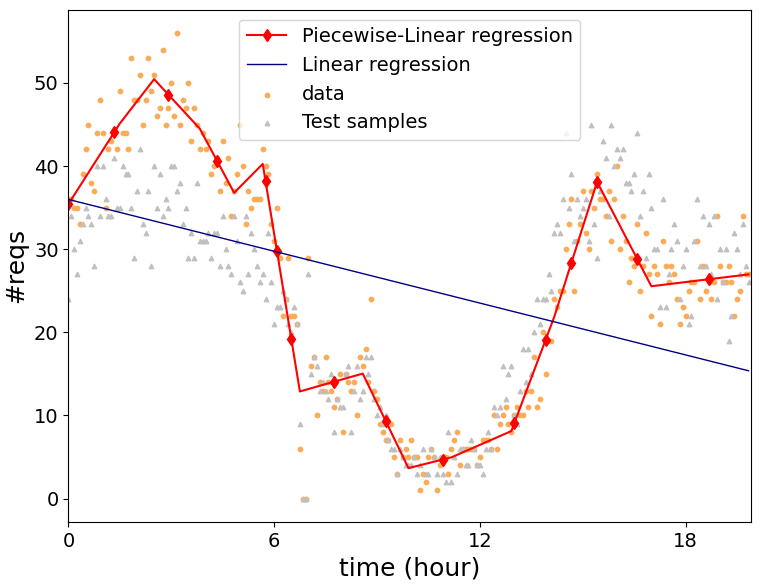}%
		\label{Apendix-figure-regressionSF-linear-Tikhonov}}
	\caption{The piecewise-linear regression of the arrival rate function on the SF dataset}
	\label{Apendix-figure-regressionSF-linear}
\end{figure}

\begin{figure}[H]
	\centering
	\subfloat[$\textit{DBM}_I$]{\includegraphics[width=0.5\textwidth]{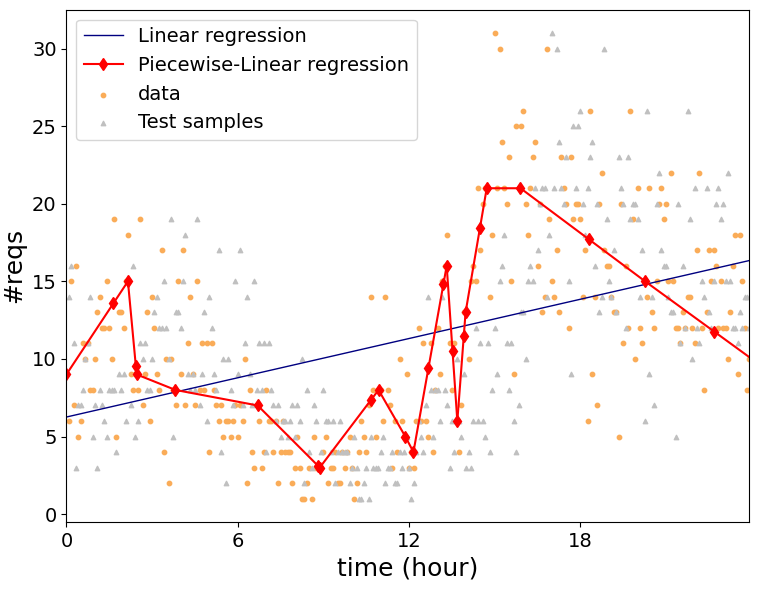}%
		\label{Apendix-figure-regressionPT-linear-Ivanov}}
	\hfil
	\subfloat[$\textit{DBM}_T$]{\includegraphics[width=0.5\textwidth]{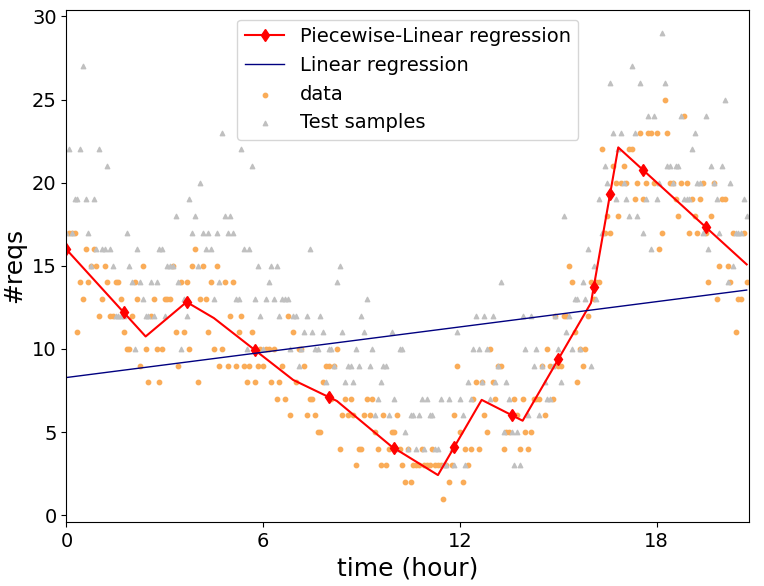}%
		\label{Apendix-figure-regressionPT-linear-Tikhonov}}
	\caption{The piecewise-linear regression of the arrival rate function on the PT dataset}
	\label{Apendix-figure-regressionPT-linear}
\end{figure}

\begin{figure}[H]
	\centering
	\subfloat[$\textit{DBM}_I$]{\includegraphics[width=0.5\textwidth]{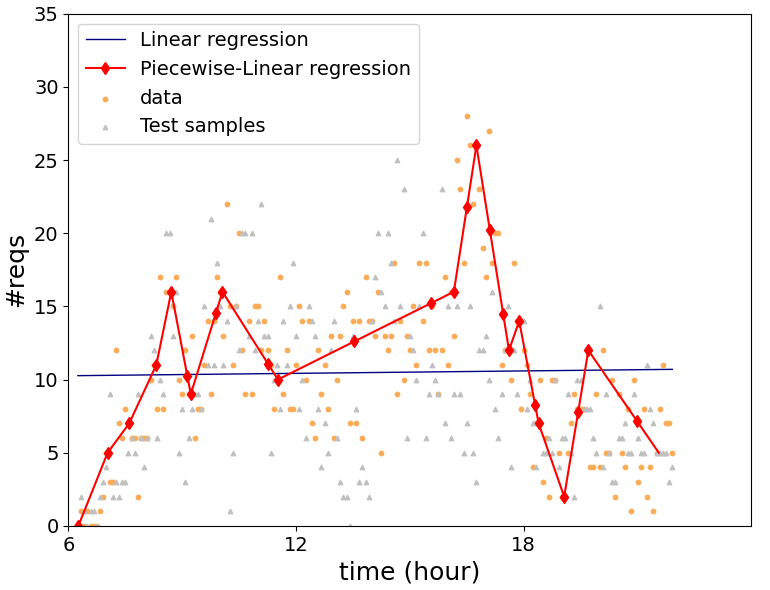}%
		\label{Apendix-figure-regressionIB-linear-Ivanov}}
	\hfil
	\subfloat[$\textit{DBM}_T$]{\includegraphics[width=0.5\textwidth]{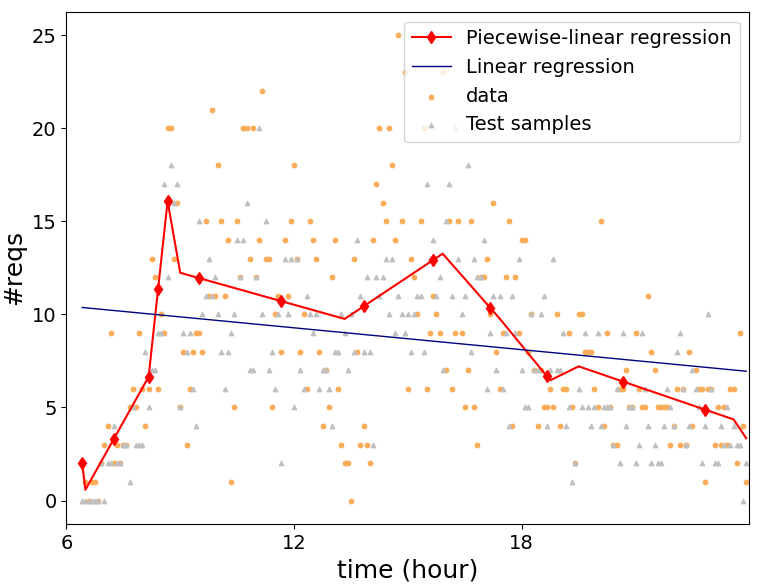}%
		\label{Apendix-figure-regressionIB-linear-Tikhonov}}
	\caption{The piecewise-linear regression of the arrival rate function on the IB dataset}
	\label{Apendix-figure-regressionIB-linear}
\end{figure}

\begin{figure}[H]
	\centering
	\subfloat[$\textit{DBM}_I$]{\includegraphics[width=0.5\textwidth]{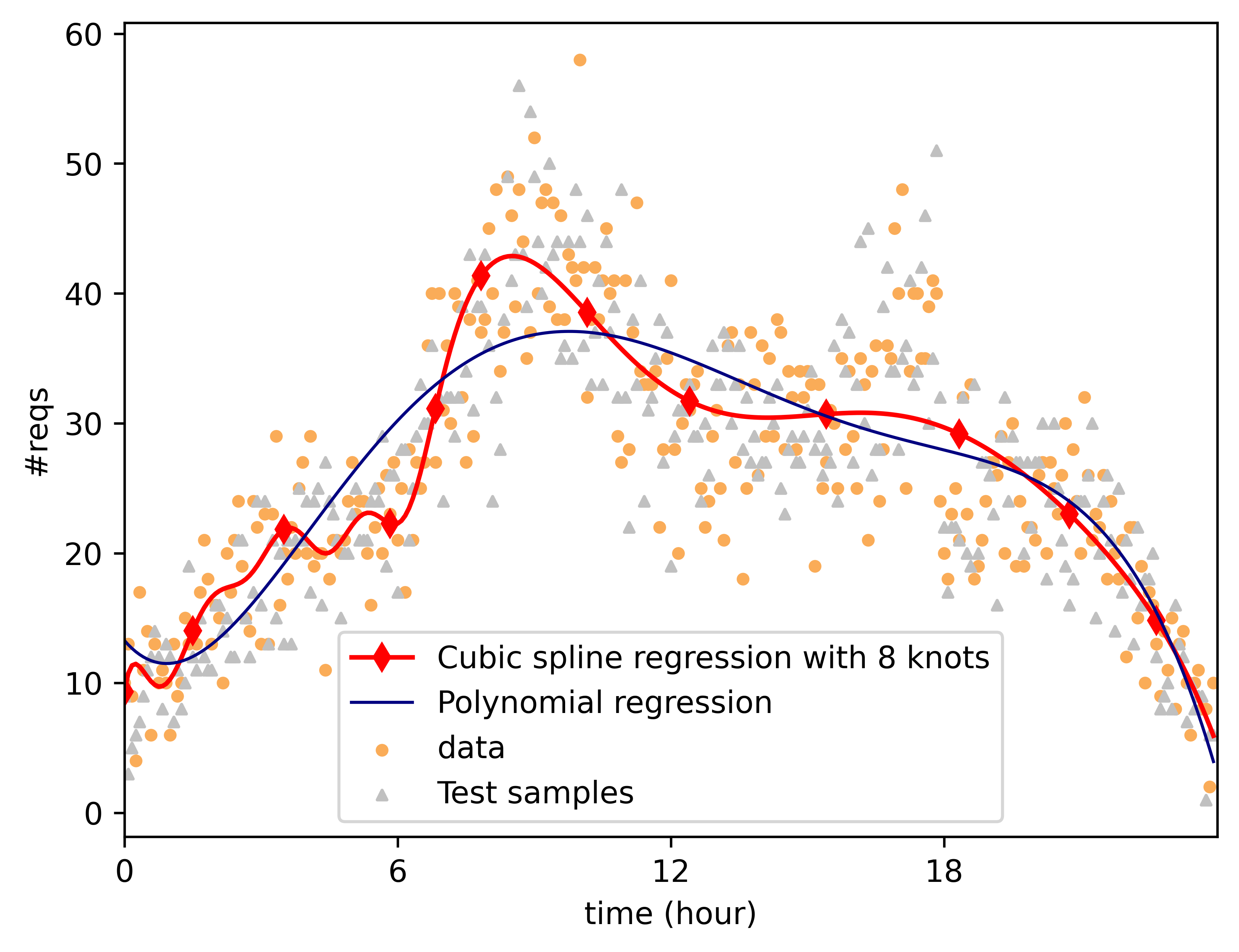}%
		\label{Apendix-figure-regressionAFD-poly-Ivanov}}
	\hfil
	\subfloat[$\textit{DBM}_T$]{\includegraphics[width=0.5\textwidth]{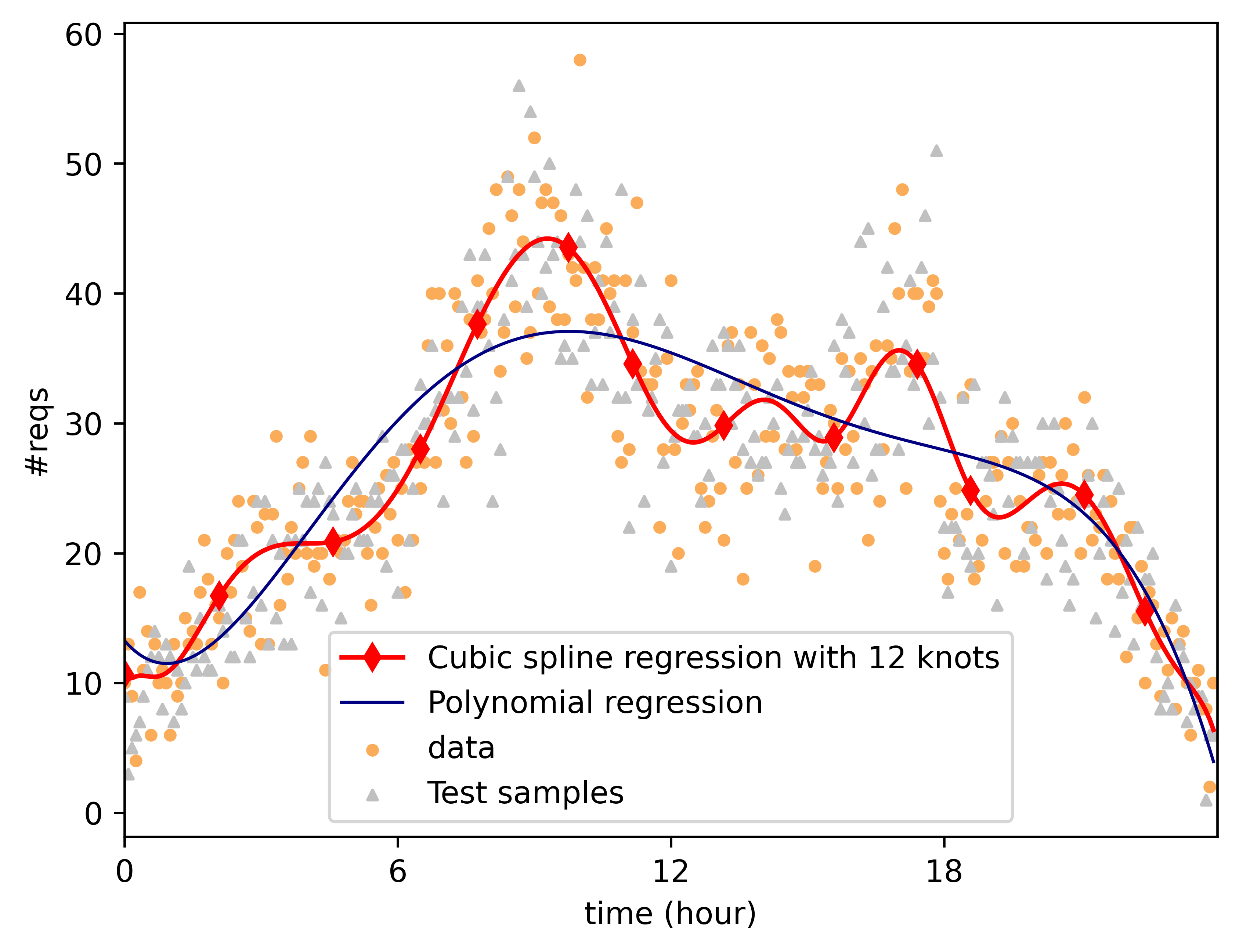}%
		\label{Apendix-figure-regressionAFD-poly-Tikhonov}}
	\caption{The piecewise-polynomial regression of the arrival rate function on the AF dataset}
	\label{Apendix-figure-regressionAFD-poly}
\end{figure}

\begin{figure}[H]
	\centering
	\subfloat[$\textit{DBM}_I$]{\includegraphics[width=0.5\textwidth]{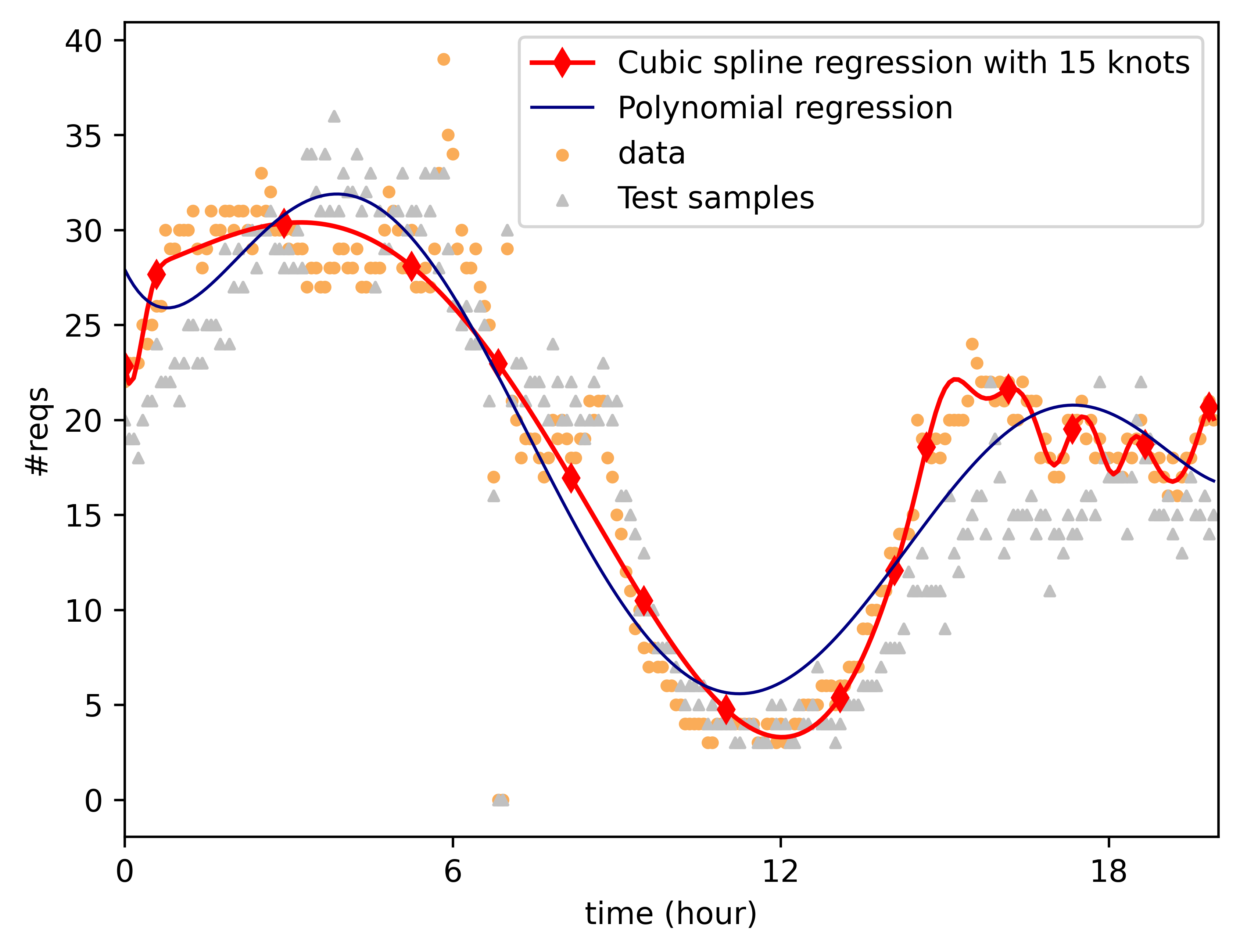}%
		\label{Apendix-figure-regressionSF-poly-Ivanov}}
	\hfil
	\subfloat[$\textit{DBM}_T$]{\includegraphics[width=0.5\textwidth]{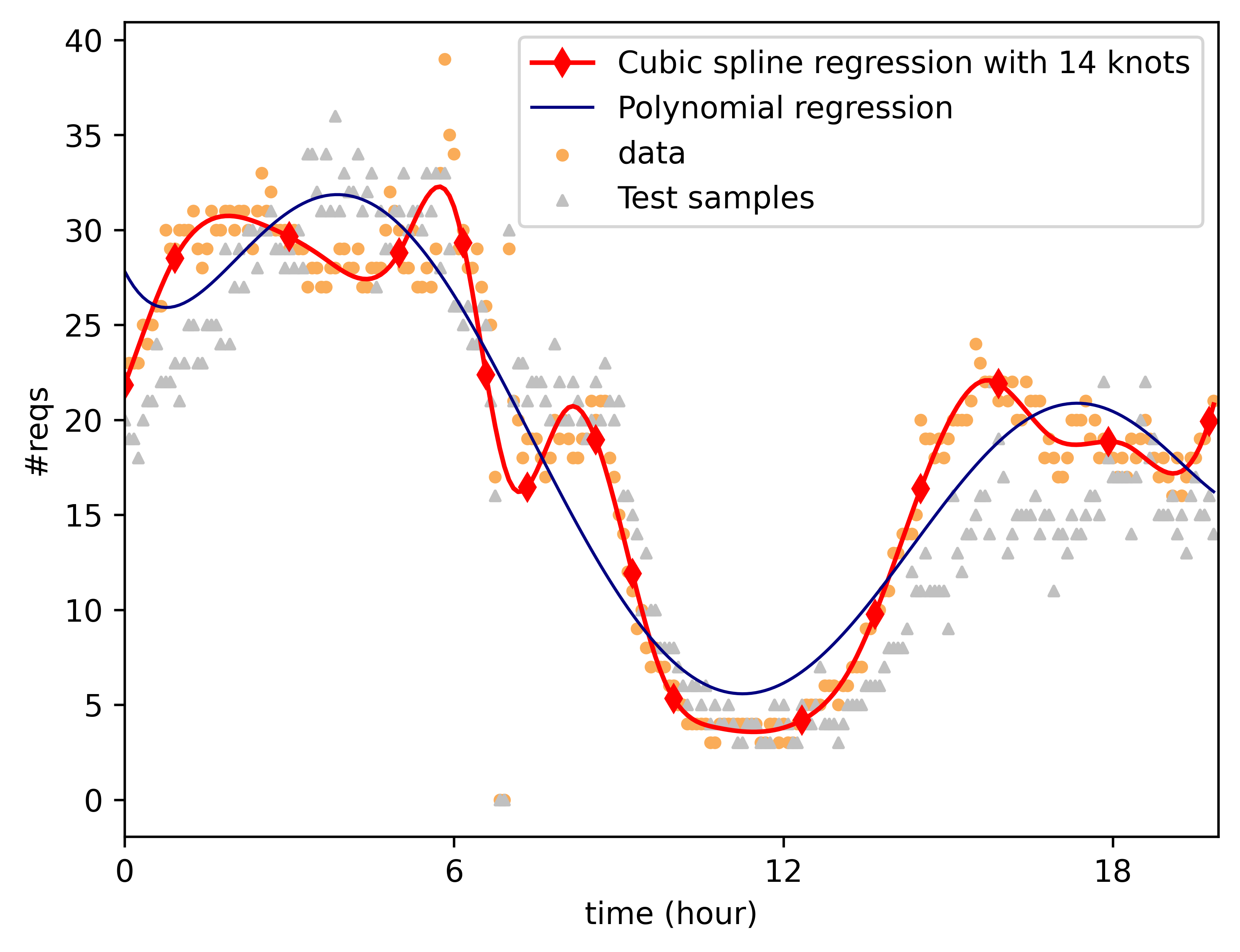}%
		\label{Apendix-figure-regressionSF-poly-Tikhonov}}
	\caption{The piecewise-polynomial regression of the arrival rate function on the SF dataset}
	\label{Apendix-figure-regressionSF-poly}
\end{figure}

\begin{figure}[H]
	\centering
	\subfloat[$\textit{DBM}_I$]{\includegraphics[width=0.5\textwidth]{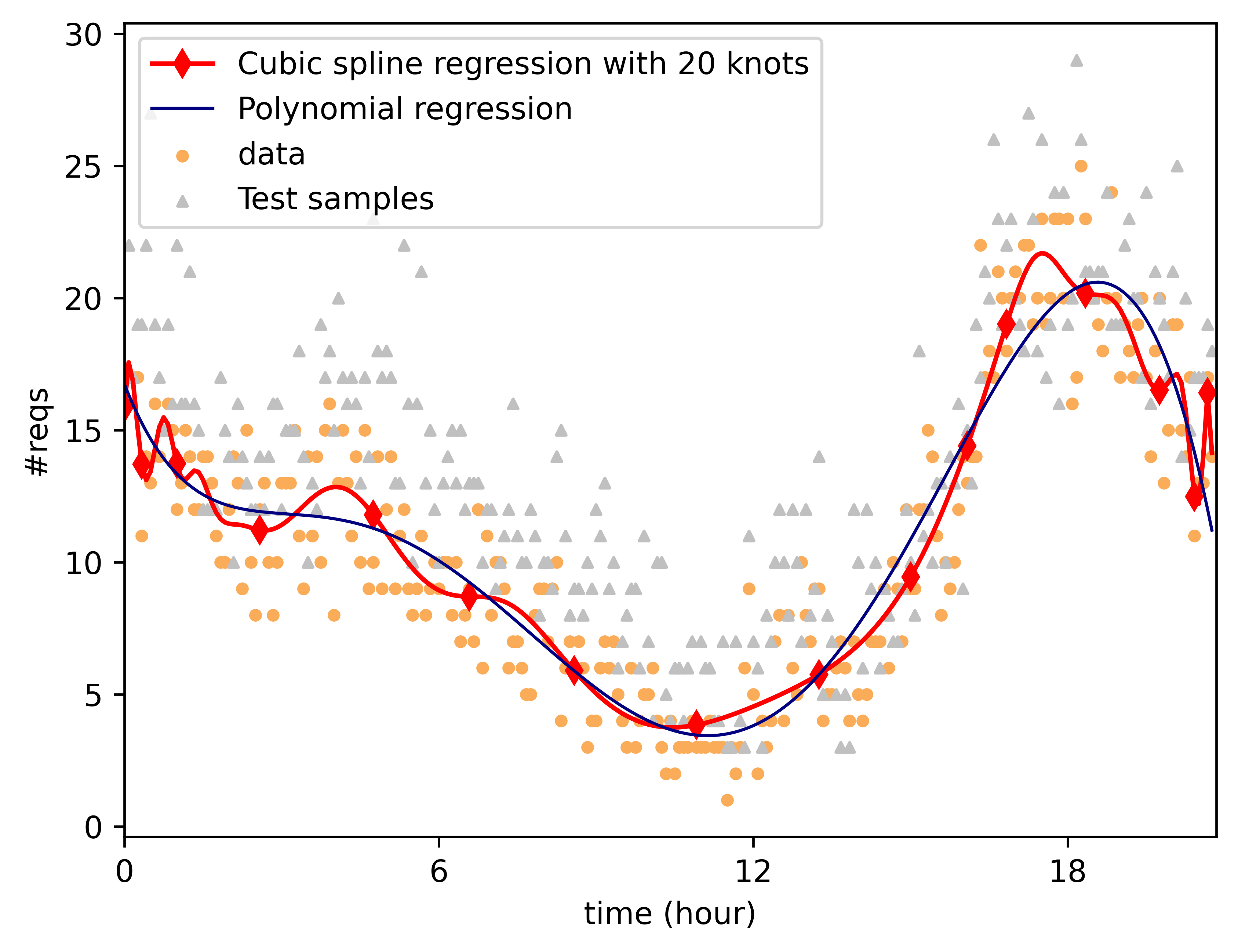}%
		\label{Apendix-figure-regressionPT-poly-Ivanov}}
	\hfil
	\subfloat[$\textit{DBM}_T$]{\includegraphics[width=0.5\textwidth]{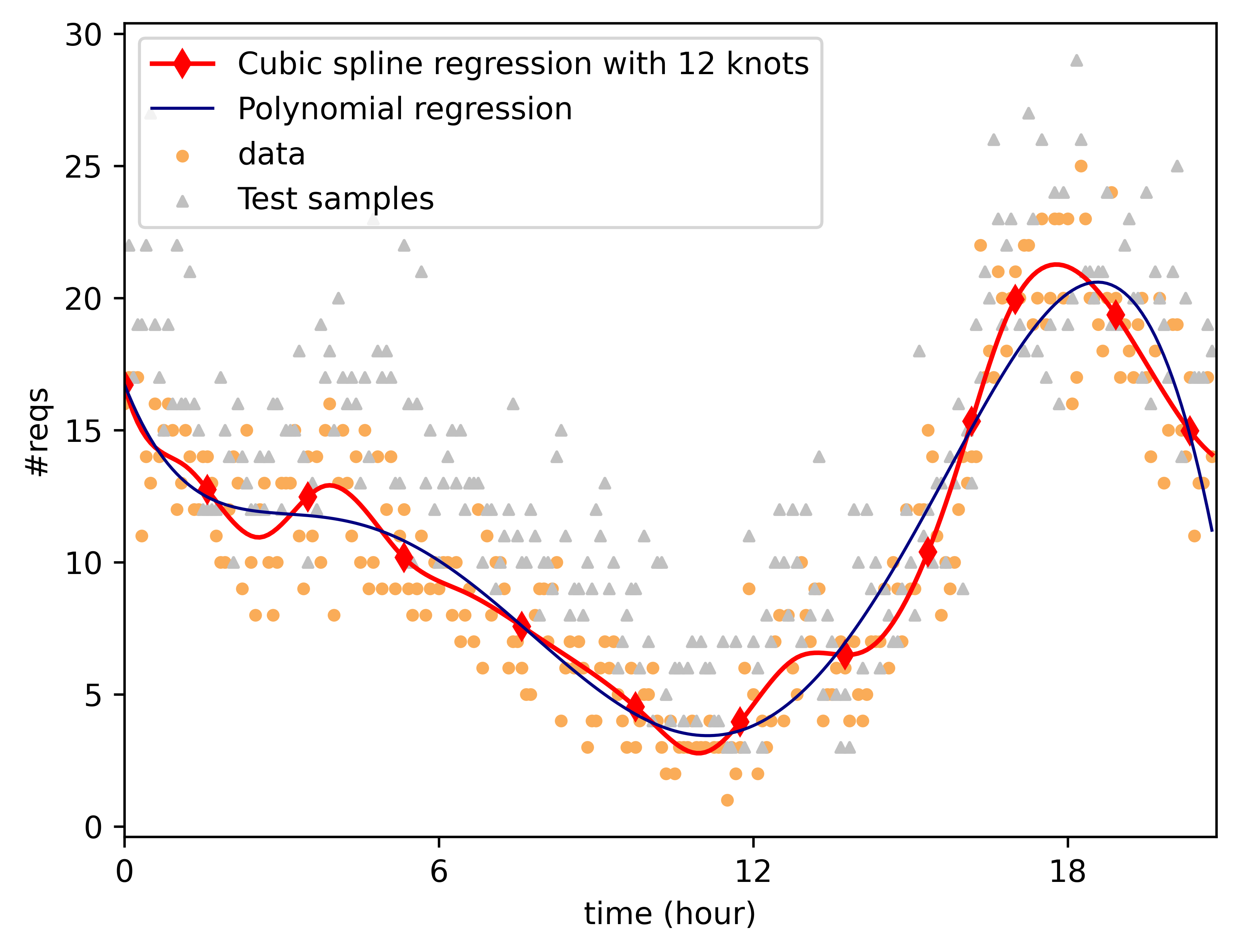}%
		\label{Apendix-figure-regressionPT-poly-Tikhonov}}
	\caption{The piecewise-polynomial regression of the arrival rate function on the PT dataset}
	\label{Apendix-figure-regressionPT-poly}
\end{figure}

\begin{figure}[H]
	\centering
	\subfloat[$\textit{DBM}_I$]{\includegraphics[width=0.5\textwidth]{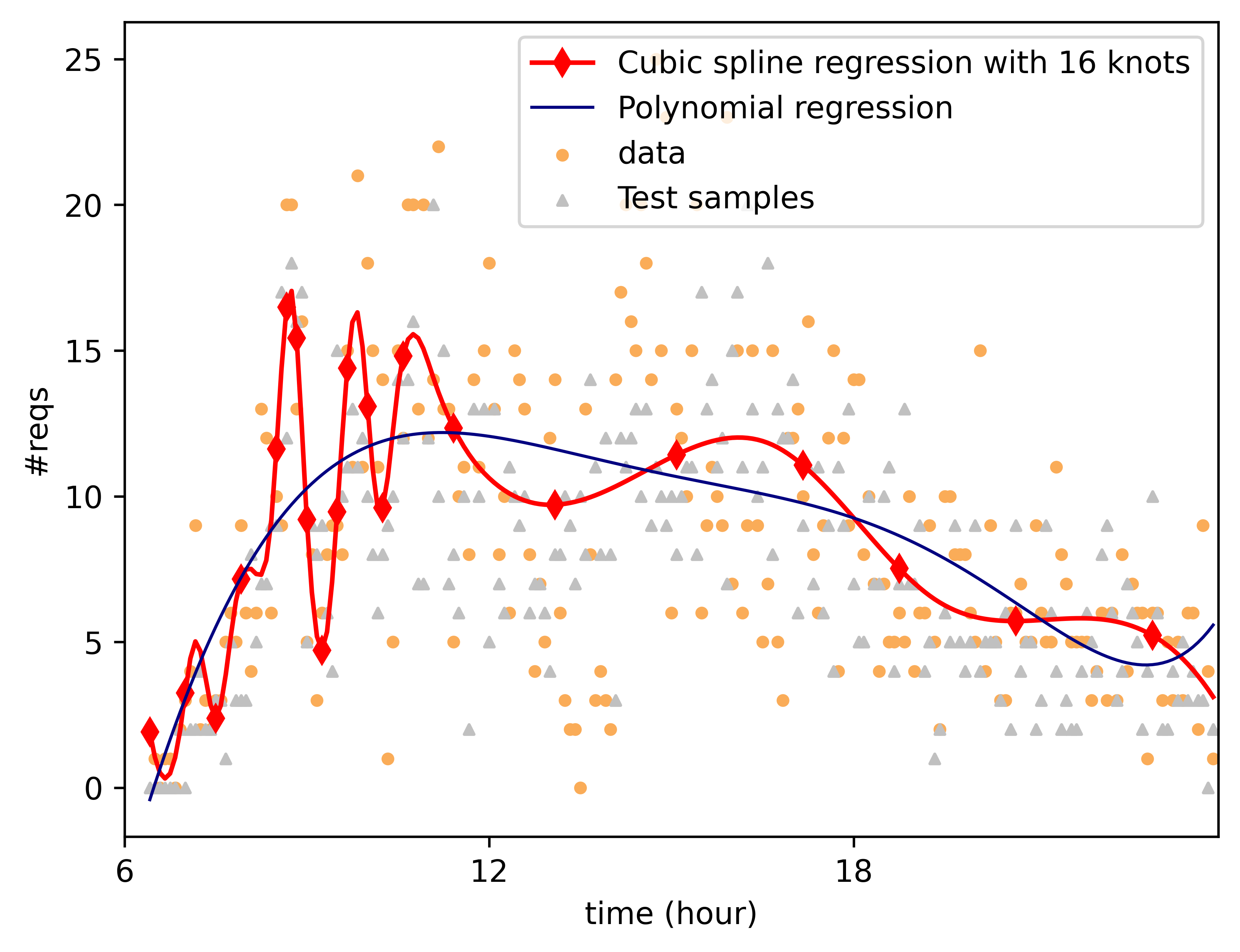}%
		\label{Apendix-figure-regressionIB-poly-Ivanov}}
	\hfil
	\subfloat[$\textit{DBM}_T$]{\includegraphics[width=0.5\textwidth]{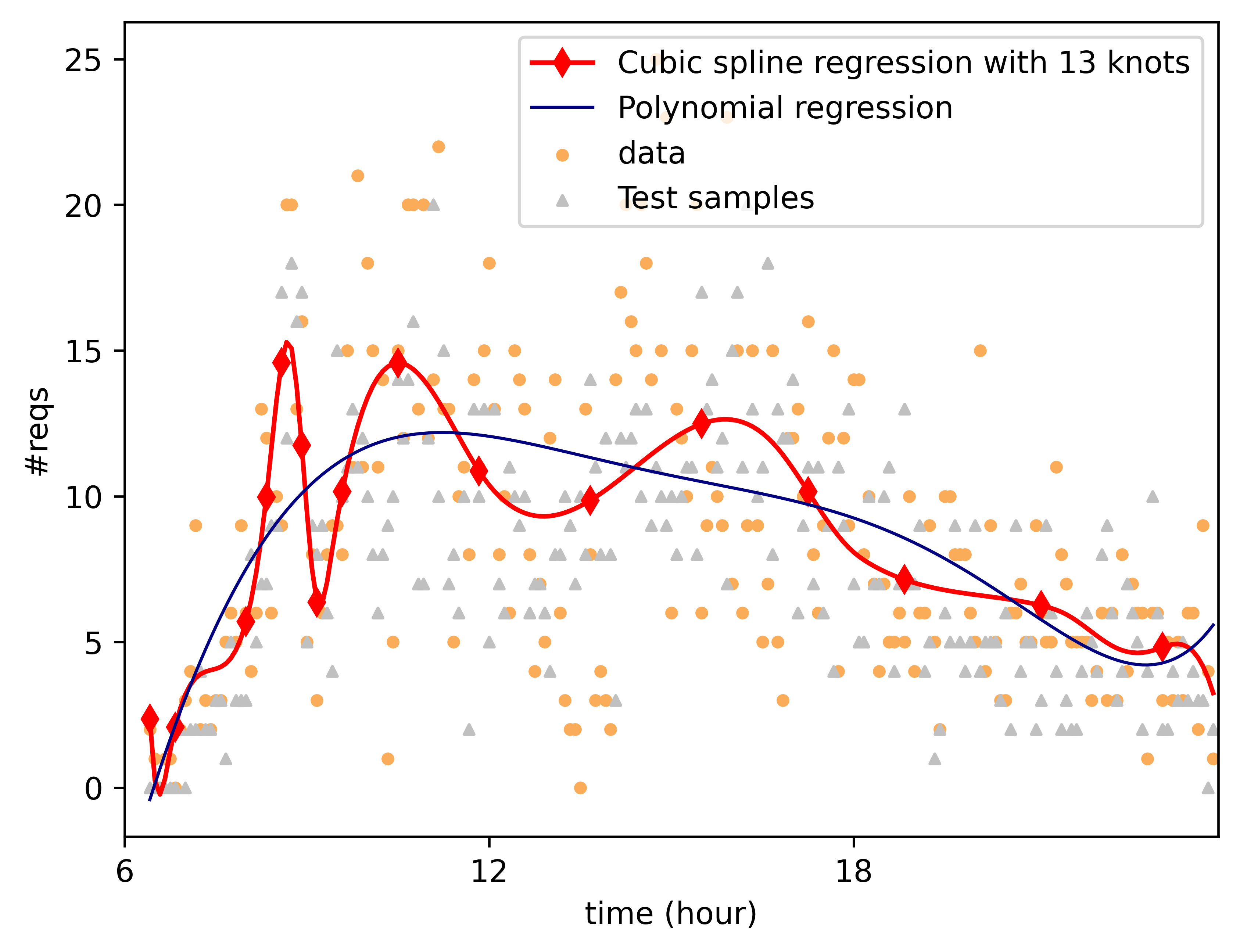}%
		\label{Apendix-figure-regressionIB-poly-Tikhonov}}
	\caption{The piecewise-polynomial regression of the arrival rate function on the IB dataset}
	\label{Apendix-figure-regressionIB-poly}
\end{figure}

\end{appendices}

\bibliography{ref}

\end{document}